\newtheorem{theorem}{Theorem}[section]
\newtheorem{definition}{Definition}[section]
\newcommand{\ab}[1]{\todo[color=green, size=\tiny, inline]{AB: #1}}
\newcommand{\g}[1]{\todo[color=cyan, size=\tiny, inline]{G: #1}}
\renewcommand{\ab}[1]{}
\renewcommand{\g}[1]{}
\DeclareMathOperator*{\range}{range}
\newcommand{\prevmethod}{Fed-BNSL\xspace}
\newcommand{\method}{Fed-Sparse-BNSL\xspace}
\newcommand{\dpmethod}{DP-Fed-Sparse-BNSL\xspace}
\newcommand{\dpprevmethod}{DP-Fed-BNSL\xspace}
\begin{document}

%
\runningtitle{Differentially Private and Federated Structure Learning in Bayesian Networks}

%

\twocolumn[

\aistatstitle{Differentially Private and Federated\\Structure Learning in Bayesian Networks}

\aistatsauthor{ Ghita Fassy El Fehri \And Aurélien Bellet \And  Philippe Bastien }

\aistatsaddress{PreMeDICal, Inria, Idesp,\\ Inserm, University of Montpellier \And PreMeDICal, Inria, Idesp,\\ Inserm, University of Montpellier \And L’Oréal Research and Innovation,\\  Aulnay-sous-Bois, France} ]

\begin{abstract}
\looseness=-1 Learning the structure of a Bayesian network from decentralized data poses two major challenges: (i) ensuring rigorous privacy guarantees for participants, and (ii) avoiding communication costs that scale poorly with dimensionality. In this work, we introduce \method, a novel federated method for learning linear Gaussian Bayesian network structures that addresses both challenges. By combining differential privacy with greedy updates that target only a few relevant edges per participant, \method efficiently uses the privacy budget while keeping communication costs low. Our careful algorithmic design preserves model identifiability and enables accurate structure estimation. Experiments on synthetic and real datasets demonstrate that \method achieves utility close to non-private baselines while offering substantially stronger privacy and communication efficiency.
\end{abstract}

\section{INTRODUCTION}
\looseness=-1 Bayesian networks (BNs) compactly represent joint distributions and causal dependencies among random variables. Their graph structure, in the form of a directed acyclic graph (DAG), encodes conditional independencies, yielding interpretable models widely used in biology, medicine, and social sciences. A key challenge is to infer the BN structure directly from data, a task known as Bayesian network structure learning (BNSL). Among existing approaches, continuous optimization methods have shown strong empirical performance on linear Gaussian BNs, where each variable is modeled as a linear function of its parents plus Gaussian noise \citep{notears,ng2022masked,zheng2020,golem,vowels}. However, these methods assume centralized access to data. In many real-world scenarios, data are distributed across multiple institutions or participants and cannot be aggregated due to privacy concerns. This motivates the study of federated BNSL, which aims to collaboratively reconstruct a global DAG from decentralized data \citep{fed-notears}.
Federated BNSL faces two key challenges: (i) ensuring strong privacy guarantees for participants' data, and (ii) preventing communication costs from growing excessively with dimensionality. These challenges are further compounded by the fact that the number of possible edges in a DAG grows quadratically with the number of variables.

In this work, we address both challenges with \method, a federated method for learning linear Gaussian Bayesian network structures, and its privacy-preserving variant \dpmethod.
Our approach combines differential privacy (DP) with a greedy, coordinate-wise update strategy that restricts communication to only a few relevant edges per participant at each iteration. This leads to two key advantages compared to prior work. First, communication costs are greatly reduced since sparse edge updates are transmitted instead of dense matrices. Second, privacy budget scales with the number of updated dependencies per participant, not with the full dimension. The core component of our method is a Proximal Greedy Coordinate Descent (PGCD) algorithm for the local updates, which produces high-quality sparse solutions without requiring data standardization---which is known to compromise the identifiability of the DAG \citep{ng,loh}. For the private variant, \dpmethod, we improve upon the Differentially Private Proximal Greedy Coordinate Descent (DP-PGCD) algorithm \citep{mangold} by minimizing the noise added under DP, leveraging mechanisms that can be analyzed under zero-concentrated differential privacy (zCDP) for tighter privacy accounting. This results in favorable privacy-utility trade-off, as demonstrated by both theoretical guarantees and empirical results. Besides the above benefits, we empirically show that \method supports participant-level personalization in heterogeneous settings, where the structure is shared but edge weights differ across participants. 

\textbf{Contributions.} Our main contributions can be summarized as follows: (i) a communication-efficient federated BNSL algorithm that exploits sparsity, reducing communication costs; (ii) a differentially private variant with formal $(\varepsilon, \delta)$-DP guarantees, that achieves strong utility even in high-dimensional settings; (iii) an empirical evaluation on synthetic homogeneous and heterogeneous data as well as a real dataset, demonstrating the effectiveness of our method in terms of convergence, communication costs, privacy-utility trade-offs, robustness to dimensionality, and effective participant-level personalization.

\textbf{Organization of the paper.} The paper is organized as follows. Section \ref{sec:pb-setting} formalizes the problem setting; Section \ref{sec:related-work} reviews related work; Section \ref{sec:fed-alg} presents the federated algorithm \method; Section \ref{sec:DP-alg} introduces the private variant \dpmethod and its guarantees; and finally, Section \ref{sec:expe} reports experimental results, and Section \ref{sec:conclu} concludes. 

\section{PROBLEM SETTING}
\label{sec:pb-setting}

\textbf{Structure learning in linear Gaussian BNs.}
\looseness=-1 We consider the task of learning the structure of a Bayesian Network (BN) from observational data. A BN \citep{BN} is a directed acyclic graph (DAG) where each node $X_i$ represents a random variable, and edges encode conditional dependencies. We represent a (weighted) DAG using a weighted adjacency matrix $W \in \mathbb{R}^{d \times d}$, where each entry $W_{i,j}$ encodes the strength of a directed edge from variable $X_i$ to $X_j$. A nonzero $W_{i,j}$ indicates a direct causal effect of $X_i$ on $X_j$, whereas $W_{i,j} = 0$ indicates no direct effect.

We focus on linear Gaussian BNs, where each variable is a linear function of its parents in the DAG plus additive Gaussian noise:
\[
X_i = \sum_{j \in \mathrm{Pa}(X_i)} W_{ji} X_j + Z_i,
\qquad Z_i \sim \mathcal{N}(0, \sigma_i^2),
\]
where $\mathrm{Pa}(X_i) = \{X_j : W_{j,i} \neq 0\}$ denotes the set of parents of $X_i$ in the DAG.

The structure learning problem is to recover $W$ from a dataset $X \in \mathbb{R}^{n \times d}$, which is known to be identifiable when noise variances are equal or known \citep{identif,loh}.
The combinatorial nature of the problem is addressed by reformulating it as a continuous optimization task using the NOTEARS framework \citep{notears}:
\begin{equation}
\label{eq:notears}
\min_{W \in \mathbb{R}^{d \times d}} L(W; X) + \lambda \|W\|_1
\qquad \text{s.t.} \quad h(W) = 0, 
\end{equation}
where $L(W; X) = \frac{1}{2n}\| X - XW\|_F^2$ is the least squared loss, $\lambda \|W\|_1$ promotes sparsity, and 
\[
h(W) = \operatorname{tr}(e^{W \circ W}) - d = 0
\]
ensures acyclicity, where $\circ$ is the Hadamard product. 

\textbf{Federated learning.}
In this work, we consider a setting with $P$ participants, each holding a private local dataset $X^{(p)} = \{X_1^{(p)}, \dots, X_{n_p}^{(p)}\} \subset \mathbb{R}^d$, with $n_p$ the number of samples held by participant $p$. The goal is to learn a shared DAG structure with the coordination of a central server, while keeping all data decentralized.

Problem \eqref{eq:notears} can then be equivalently reformulated as a consensus problem \citep{fed-notears}:
\begin{align}
    \textstyle\min_{B, W} \; \quad & \textstyle\mathcal{L}(B; W) = \sum_{p=1}^P L(B_p; X^{(p)}) + \lambda \|W\|_1\nonumber \\
    \text{s.t. } \quad &  B_p = W,  \qquad \forall p \in  \{1, \dots, P\},\nonumber \\
    & h(W) = 0,
\label{eq:fed-admm-base}
\end{align}
where $B=[B_1,\dots,B_P]$ and $B_p\in\mathbb{R}^{d\times d}$ is the local matrix for participant $p$ and $W$ the global consensus adjacency matrix. 

\textbf{Differential privacy.}
To protect the participants' data in the federated learning process, we aim to enforce formal differential privacy (DP) guarantees \citep{dwork1}. Intuitively, a mechanism satisfies DP if replacing a single data point has only a limited impact on the algorithm's output distribution. 
\begin{definition}[Differential privacy]
    Let $\varepsilon, \delta > 0$. A randomized algorithm $\mathcal{A}$ satisfies $(\varepsilon, \delta)$-DP if, for any two datasets $D_1$ and $D_2$ of fixed size that differ in exactly one record, and for any possible $O\subseteq\range(\mathcal{A})$,
    \[
    P(\mathcal{A}(D_1) \in O) \leq e^{\varepsilon} P(\mathcal{A}(D_2) \in O) + \delta.
    \]
\end{definition}

In our federated setting, we aim to limit the influence of any single participant's data on the information shared during training, thereby providing formal privacy guarantees against both external observers and an honest-but-curious server.

\section{RELATED WORK}
\label{sec:related-work}

\textbf{Bayesian networks structure learning (BNSL)} aims at recovering the graph structure of a BN from observational data \citep{BN}. Traditional structure learning approaches are based on discrete optimization and can be categorized into two families: constraint-based methods, which rely on conditional independence tests, and score-based methods, which optimize a goodness-of-fit score. For a detailed survey, 
we refer the reader to \cite{kitson}.

Recently, continuous optimization methods have gained popularity, notably with the introduction of NOTEARS \citep{notears}, which proposed the first differentiable acyclicity constraint in the context of linear Gaussian BNs. This allows BNSL to be formulated as a continuous optimization problem, avoiding the combinatorial search of discrete methods. Several extensions of NOTEARS have since been proposed \citep{ng2022masked,zheng2020,golem}, see also 
\citet[][Section 5]{vowels}. 
However, these methods require centralized access to data.

\textbf{Federated BNSL} extensions have been developed for both continuous and discrete optimization approaches. Discrete optimization methods \citep{vertibayes,FedGES} are primarily designed for discrete data and are not directly applicable to continuous data, which is the focus of our work.
For continuous optimization, \prevmethod \citep{fed-notears} adapts the NOTEARS formulation to the federated setting via the alternating direction method of multipliers (ADMM) to solve \eqref{eq:fed-admm-base}, enabling participants to collaboratively learn a consensus DAG without centralizing their data. FedDAG \citep{fedDAG}, built on the GOLEM framework \citep{golem}, relies on FedAvg \citep{fedavg} to learn and aggregate local structures across participants. Both methods require participants to transmit $d \times d$ matrices at each round, which becomes impractical in high-dimensions, and neither provides formal privacy guarantees.


\textbf{Differentially private BNSL} methods have been proposed in the context of synthetic data generation. PrivBayes \citep{privbayes} and subsequent work \citep{ergute} propose approaches to learn BNs under DP, enabling the release of synthetic data with formal privacy guarantees. Related efforts have appeared in the causal discovery community \citep{NEURIPS2020_3b13b1eb}. However, these methods are designed for discrete data, so they cannot be applied to our continuous setting, and assume centralized access to all data. 

\textbf{Federated and private BNSL} has been explored in recent work. In \citep{regrets}, participants compute scores on local DAGs and send noisy scores to a central server, which then identifies a consensus DAG minimizing a notion of regret. However, the method lacks explicit sensitivity bounds for DP guarantees and does not scale, requiring as many communication rounds as graph edges. \citet{wang} extend constraint-based methods to the federated setting by performing conditional independence tests locally and aggregating results via secure computation, but this approach is again limited to discrete data.
\section{FEDERATED ALGORITHM}
\label{sec:fed-alg}


In existing federated BNSL approaches, 
each participant must transmit a dense $d \times d$ matrix to the central server \citep{fed-notears,fedDAG}, resulting in a communication cost that scales quadratically with the number of variables, despite the underlying DAG typically being sparse. In this section, we propose a communication-efficient alternative that exploits this sparsity, which is also crucial for the design of an effective differentially private version (Section~\ref{sec:DP-alg}).

\subsection{Algorithm Overview}

We first slightly modify the formulation \eqref{eq:fed-admm-base} of \citet{fed-notears}, redefining the objective function $\mathcal{L}(B, W)$ by transferring the $\ell_1$ sparsity penalty from the global consensus matrix $W$ to the local matrices:
\begin{align}
    \textstyle\min_{B, W} \; \quad & \textstyle\mathcal{L}(B; W) = \sum_{p=1}^P L(B_p; X^{(p)}) + \lambda \|B_p\|_1\nonumber \\
    \text{s.t. } \quad &  B_p = W,  \qquad \forall p \in  \{1, \dots, P\},\nonumber \\
    & h(W) = 0.
\label{eq:FSB-form}
\end{align}
Note that this reformulation yields an equivalent problem because of the consensus constraints in \eqref{eq:fed-admm-base}. Following the approach of \prevmethod, we solve this constrained optimization problem in a federated way using the ADMM-based augmented Lagrangian method, leading to the following iterative update rules:
\begin{align}
     B_p^{t+1} & = \operatorname*{arg\,min}_{B_p} (L(B_p ;X^{(p)}) + \lambda \|B_p\|_1 \label{eq:Bp}\\
    & + \operatorname{tr}\left(\beta_p^t (B_p - W^t)^\top\right) 
    + \frac{\rho_2}{2} \|B_p - W^t\|_F^2),  \notag  \\
    W^{t+1} & = \textstyle\operatorname*{arg\,min}_W (\sum_{p=1}^P \operatorname{tr}\left(\beta_p^t (B_p^{t+1} - W)^\top\right) \label{eq:W} \\
    &\textstyle + \frac{\rho_2^t}{2} \sum_{p=1}^P \|B_p^{t+1} - W\|_F^2
    + \alpha^t h(W) + \frac{\rho_1}{2} h(W)^2), \notag \\
    \alpha^{t+1} & = \alpha^t + \rho_1 h(W^{t+1}), \label{eq:alpha}\\
     \beta_p^{t+1} & = \beta_p^{t} + \rho_2(B_p^{t+1} - W^{t+1}), \label{eq:beta}
\end{align}

where $\rho_1, \rho_2 > 0$ are the penalty coefficients and $\alpha \in \mathbb{R}$ and $\beta_1, \dots, \beta_P \in \mathbb{R}^{d \times d}$ are the dual variables (Lagrange multipliers). Algorithm \ref{algo:fed-BNSL} summarizes the proposed federated BNSL algorithm.


Imposing the $\ell_1$ penalty directly on the local matrices leverages the inherent sparsity of Bayesian networks to reduce communication. Since solutions to the local subproblem \eqref{eq:Bp} are expected to be sparse, participants only need to exchange the nonzero entries of their estimates, reducing the communication cost from $O(d^2)$ to the number of identified dependencies. This sparsity propagates to the shared structure $W$: for all entries that are zero across the local matrices, any local minimizer of \eqref{eq:W} will likewise assign a zero to that entry.

\begin{algorithm}[t]
    \caption{\method}
    \label{algo:fed-BNSL}
    \begin{algorithmic}[1]
        \State Initialize $B_p^0=0, W^0=0, \alpha^0=0, \beta_p^0=0$
        \For{$t = 1$ \textbf{to} $T$}
            \For {each participant $p$ in parallel}
                \State Update $B_p^{t+1}$ by solving Eq. \ref{eq:Bp} with PGCD
                \State Send sparse update $B_p^{t+1}$ to server
            \EndFor
            \State Update $W^{t+1}$ by solving Eq. \ref{eq:W} with L-BFGS
            \State Update dual variables $\alpha^{t+1}, \beta_p^{t+1}$ (Eq. \ref{eq:alpha}-\ref{eq:beta})
            \State Server sends back sparse $W^{t+1}$ to participants
        \EndFor
    \end{algorithmic}
\end{algorithm}

It remains to select the appropriate local solver for \eqref{eq:Bp}, which is a subtle but crucial design choice.

\subsection{Choice of Local Solver}
\label{sec:local_solver}
The local subproblem \eqref{eq:Bp} is a LASSO-type optimization problem, for which many solvers exist \citep{LASSO,FW}. However, these solvers typically assume that data has been standardized (features centered and rescaled). Standardization ensures that all variables are on a comparable scale, allowing the $\ell_1$ penalty to be applied uniformly across coefficients~\citep{LASSO}.
However, in the context of linear Gaussian structural equation model with equal error variances \citep{identif}, standardization is known to invalidate the theoretical conditions under which the true DAG can be identified \citep{ng,loh}
We therefore require a solver that can meaningfully handle LASSO problems without rescaling the data. 

For this reason, we rely on the \emph{Proximal Greedy Coordinate Descent} (PGCD) algorithm \citep{tseng2009Coordinate,nutini2015Coordinate,karimireddy2019Efficient}. 
PGCD is an iterative method that updates one coordinate at a time, selecting the coordinate that yields the greatest potential improvement in the objective. Consider participant $p$ with initialization $B^0\in\mathbb{R}^{d\times d}$. At each iteration $k$, PGCD evaluates each coordinate $(i,j)$ using the score
\begin{align}
\begin{aligned}
     S_{i,j} &=  \sqrt{M_{i,j}} \Big| \operatorname{prox}_{\frac{\lambda}{M_{i,j}} \lvert \cdot \rvert} \Big( B_{i,j}^{k} \\
     &- \frac{1}{M_{i,j}} \big( \nabla_{i,j} {\mathcal{L}}(B^{k};X^{(p)})\big)\Big) - B_{i,j}^{k} \Big|,
\label{eq:score-non-private}
\end{aligned}
\end{align}
where $\mathcal{L}(B^{k};X^{(p)})=L(B^{k};X^{(p)}) + \operatorname{tr}(\beta (B^{k} - W)^\top) + \frac{\rho}{2} \|B^{k} - W\|_F^2$ is the smooth part of the local objective, $\nabla_{i,j} {\mathcal{L}}(B^{k};X^{(p)})$ is the partial derivative of $\mathcal{L}$
with respect to $B^{k}_{i,j}$, $\operatorname{prox}_{\lambda \lvert \cdot \rvert}(x) = \operatorname{sign}(x) \cdot \max(|x| - \lambda, 0)$ is the soft-thresholding operator, and $M_{i,j}$ is the coordinate-wise smoothness constant of $\mathcal{L}$.
The coordinate with the highest score, $(m,n) = \arg\max_{i,j} S_{i,j}$, is then greedily updated by a proximal gradient step:  
\begin{equation}
\label{eq:update}
    B_{m,n}^{k+1} = \operatorname{prox}_{\frac{\lambda \gamma}{M_{m,n}} \lvert \cdot \rvert} \big( B^{k}_{m,n} 
    - \frac{\gamma}{M_{m,n}} \nabla_{m,n} {\mathcal{L}}(B^{k};X^{(p)})\big)
\end{equation}

PGCD is particularly well-suited to our setting for two reasons. First, the coordinate-wise smoothness constants naturally normalize updates, eliminating the need for standardization and thereby preserving the identifiability conditions of the underlying DAG. We empirically validate the importance of this property in Appendix~\ref{app:std}, where we show that classical solvers fail on raw data and that standardization compromises structure recovery.
Second, the greedy coordinate selection combined with proximal updates enforces sparsity, retaining only the most relevant dependencies, thereby significantly reducing communication costs.

Furthermore, because the $\ell_1$-regularized local subproblem \eqref{eq:Bp} satisfies both coordinate-wise smoothness and strong convexity, the PGCD algorithm benefits from a linear convergence guarantee to the exact local optimum \citep[Theorem 1]{karimireddy2019Efficient}.


In the next section, we further leverage the properties of PGCD to ensure the privacy budget is effectively spent on the most informative updates---a key factor for achieving good privacy-utility trade-offs in high-dimensional settings.

\section{PRIVATE ALGORITHM}
\label{sec:DP-alg}

\subsection{Motivation}

In the original \prevmethod algorithm \citep{fed-notears}, the local subproblem in $B_p$—corresponding to \eqref{eq:Bp} without the $\ell_1$ penalty—is solved via the closed-form expression:
\begin{equation}
\label{eq:closed-form}
B_p^{t+1} = (\Sigma_p + \rho_2 I)^{-1}(\rho_2 W^t - \beta_p^t + \Sigma_p),
\end{equation}
where $\Sigma_p = \frac{1}{n_p}X^{(p)\top} X^{(p)}\in\mathbb{R}^{d\times d}$ denotes the empirical covariance matrix of participant $p$. Since the server receives $B_p$ and has access to $W^t$, $\rho_2$, and $\beta_p^t$, it can directly reconstruct $\Sigma_p$, as detailed in Appendix \ref{app:cov}. This poses a significant privacy risk: covariance matrices can leak detailed information about individual data points, enabling attribute inference and data reconstruction attacks \citep{doi:10.1126/sciadv.adj9260,10.1093/bioinformatics/btad531}.
Although methods exist for constructing differentially private covariance matrices \citep{DBLP:conf/uai/Wang18,DBLP:conf/nips/AminDKMV19}, they become impractical for high-dimensional datasets unless additional assumptions are made, since the privacy-induced error grows quadratically with the number of variables $d$ \citep{DBLP:conf/nips/AminDKMV19}.

Our approach, presented in Section~\ref{sec:fed-alg}, naturally produces sparse updates for $B_p$, which reduces the amount of information exposed to the server. Nevertheless, it is well known that sharing model updates in federated learning can still be vulnerable to a range of privacy attacks \citep{Shokri2019,inverting_gradients}. In this section, we introduce a differentially private variant of our federated BNSL algorithm that leverages the greedy updates of PGCD, avoiding the quadratic cost in the dimensionality $p$.


\subsection{Differentially Private Local Solver}

\begin{algorithm}[t]
    \caption{DP-PGCD}
    \label{algo:DP-PGCD}
    \begin{algorithmic}[1]
        \State Initialize $B^0=0$ (or use warm-start)
        \For{$k = 1$ \textbf{to} $K$}
        \State Compute noisy scores $\tilde S_{i,j}$ $\forall (i,j)$ (Eq. \ref{eq:noisy-score})
        \State Select $(m,n)$ with highest noisy score
        \State Update $B_{m,n}^{k+1}$ (Eq. \ref{eq:noisy-update})
        \State $B_{i,j}^{k+1} = B_{i,j}^k$ for $(i,j) \neq (m,n)$  
        \EndFor
        \State \textbf{Output:} Sparse matrix $B^K$
    \end{algorithmic}
\end{algorithm}

In \method, the only part that directly accesses the data is the participants’ local update \eqref{eq:Bp}. Therefore, it is sufficient to privatize this step; the post-processing and composition properties of differential privacy then provide guarantees for the overall algorithm.
We thus propose to rely on a differentially private version of our local solver, PGCD.

\citet{mangold} introduced a private version of PGCD which relies on the addition of Laplace noise to privatize both the greedy coordinate selection and the gradient update, providing pure $\epsilon$-DP guarantees, and resorted to the advanced composition theorem \citep{dwork2} to track the privacy loss across iterations. However, this composition is overly pessimistic, requiring unnecessarily large amounts of noise at each step and resulting in a poor privacy–utility trade-off, as the excessive noise degrades the quality of the learned matrix.

In contrast, our improved DP-PGCD uses the exponential mechanism via the ``Gumbel max trick'' \citep{DBLP:journals/corr/abs-2105-07260} for private coordinate selection and Gaussian noise for gradient updates. This design enables us to leverage zero-Concentrated Differential Privacy (zCDP) \citep{10.1007/978-3-662-53641-4_24}, under which the exponential mechanism admits a tighter privacy analysis \citep{pmlr-v119-dong20a} and composition behaves more favorably. We thus achieve tighter privacy accounting and require less noise for the same privacy budget. A detailed theoretical and numerical comparison with the approach of \citet{mangold} is provided in Appendices~\ref{app:DPPGCD-comparison} and \ref{app:DPPGCD-expe}, demonstrating a provable improvement in privacy loss of at least a factor of $\sqrt{2}$ for the same noise variance.

Our DP-PGCD algorithm (Algorithm~\ref{algo:DP-PGCD}) operates similarly to its non-private counterpart, with two key modifications. First, it uses noisy scores computed by adding independent Gumbel noise to the non-private scores defined in \eqref{eq:score-non-private}:
\begin{equation}
\label{eq:noisy-score}
\tilde S_{i,j} = S_{i,j} + \mathrm{Gumbel}( 0, \beta).
\end{equation}
Second, the gradient used in the
%
coordinate update \eqref{eq:update} is perturbed with Gaussian noise:
\begin{align}
\begin{aligned}
     B_{m,n}^{k+1} &= \operatorname{prox}_{\frac{\lambda \gamma}{M_{m,n}} \lvert \cdot \rvert} \big( B^{k}_{m,n} \\
     & - \frac{\gamma}{M_{m,n}} (\nabla_{m,n} {\mathcal{L}}(B^{k};X^{(p)}) + \mathcal{N}( 0,\sigma^2))\big).
\end{aligned}
\label{eq:noisy-update}
\end{align}
Note that this private version reduces to the non-private algorithm when $\beta = \sigma = 0$.



Each iteration of DP-PGCD computes the full gradient but updates only a single coordinate. While using the full gradient would require privatizing all $d^2$ coordinates, even though only a few are typically relevant in sparse DAGs, the exponential mechanism allows PGCD to select the most promising coordinate while incurring a privacy cost of only $O(\log d)$ \citep{dwork2}. In the following, we refer to \dpmethod as the \method algorithm using DP-PGCD as the local solver.



\subsection{Privacy Guarantees}

We provide DP guarantees for \dpmethod.


\begin{theorem}[Privacy of \dpmethod]
\label{thm:fed-dp}
    Let $\varepsilon, \delta > 0$ and $\Delta=2L_{i,j}/n_p$ where $L_{i,j}$ is the coordinate-wise Lipschitz constant of $\mathcal{L}$.
    Suppose \dpmethod (Algorithm \ref{algo:fed-BNSL}) runs for $T$ global rounds, with local updates performed with $K$ iterations of DP-PGCD (Algorithm~\ref{algo:DP-PGCD}). If the Gumbel and Gaussian noise parameters are chosen as
\[
    \beta = \frac{\sigma}{\sqrt{M_{i,j}}} =  \frac{\Delta \sqrt{KT}(\sqrt{\log(\frac{1}{\delta}) + \varepsilon} +\sqrt{\log(\frac{1}{\delta})})}{\varepsilon},
\]
then \dpmethod is $(\varepsilon, \delta)$-differentially private with respect to each participant's dataset.
\end{theorem}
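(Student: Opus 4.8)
The plan is to reduce the privacy analysis to the only data-touching step, the local $B_p$-update, and then propagate guarantees through the rest of the algorithm by the post-processing and composition properties of differential privacy. A record of participant $p$ enters the computation only through that participant's DP-PGCD runs; the server's $W$-update, the dual-variable updates, and the messages sent to other participants are all post-processing of already-released quantities, so it suffices to bound the privacy loss of the sequence of $K$ DP-PGCD iterations over $T$ rounds --- a total of $KT$ iterations --- against a neighboring change in $X^{(p)}$. Since each iterate $B^k$ depends on the data through earlier releases, I would treat the whole run as an \emph{adaptive} composition of $KT$ pairs of mechanisms (one coordinate-selection step and one gradient step per iteration), each taking the dataset together with all previous outputs, and bound each mechanism's sensitivity conditionally on those previous outputs; adaptive composition for zCDP then applies.

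For a single iteration there are exactly two primitives to privatize. First, coordinate selection: adding i.i.d. $\mathrm{Gumbel}(0,\beta)$ noise to the scores and taking the argmax is, by the Gumbel-max trick, exactly the exponential mechanism that samples $(i,j)$ with probability proportional to $\exp(S_{i,j}/\beta)$; matching this to the standard exponential mechanism identifies its pure-DP parameter as $\varepsilon_{\mathrm{EM}} = 2\Delta_S/\beta$, where $\Delta_S$ is the sensitivity of the score vector, and the bounded-range refinement of the exponential mechanism gives $\tfrac18\varepsilon_{\mathrm{EM}}^2$-zCDP. I would bound $\Delta_S$ by noting that soft-thresholding is $1$-Lipschitz, so the inner quantity in \eqref{eq:score-non-private} changes by at most $\tfrac{1}{M_{i,j}}$ times the sensitivity of $\nabla_{i,j}\mathcal{L}$, the latter being at most $\Delta = 2L_{i,j}/n_p$ since the data-dependent part of $\mathcal{L}$ is an $n_p$-term average of summands whose $(i,j)$-gradient is bounded by $L_{i,j}$ (the quadratic-penalty and dual-linear terms are data-free); the $\sqrt{M_{i,j}}$ prefactor then yields $\Delta_S \le \Delta/\sqrt{M_{i,j}}$, so the selection step is $\tfrac{\Delta^2}{2M_{i,j}\beta^2}$-zCDP. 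Second, the gradient step: update \eqref{eq:noisy-update} perturbs $\nabla_{m,n}\mathcal{L}$, of sensitivity $\Delta$, with $\mathcal{N}(0,\sigma^2)$, hence is $\tfrac{\Delta^2}{2\sigma^2}$-zCDP by the Gaussian-mechanism zCDP bound (the outer $\gamma/M_{m,n}$ rescaling cancels in the ratio). The choice $\beta = \sigma/\sqrt{M_{i,j}}$ in the statement is exactly what equalizes the two per-step contributions, each becoming $\tfrac{\Delta^2}{2M_{i,j}\beta^2}$, so one iteration is $\tfrac{\Delta^2}{M_{i,j}\beta^2}$-zCDP.

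Composing the $KT$ iterations additively in zCDP gives a total of $\rho = \tfrac{KT\,\Delta^2}{M_{i,j}\beta^2}$-zCDP for participant $p$'s view; since each record belongs to a single participant and only that participant's DP-PGCD touches it, this is the per-participant guarantee. I would then convert via the standard fact that $\rho$-zCDP implies $(\rho + 2\sqrt{\rho\log(1/\delta)},\,\delta)$-DP: setting $\rho + 2\sqrt{\rho\log(1/\delta)} = \varepsilon$ and solving the quadratic in $\sqrt{\rho}$ yields $\sqrt{\rho} = \sqrt{\log(1/\delta)+\varepsilon} - \sqrt{\log(1/\delta)}$; substituting into $\rho = KT\Delta^2/(M_{i,j}\beta^2)$ and rationalizing $(\sqrt{\log(1/\delta)+\varepsilon}-\sqrt{\log(1/\delta)})^{-1} = \varepsilon^{-1}(\sqrt{\log(1/\delta)+\varepsilon}+\sqrt{\log(1/\delta)})$ recovers the stated $\beta$ (and hence $\sigma = \beta\sqrt{M_{i,j}}$). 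The main obstacle is the sensitivity analysis of the greedy score: obtaining a bound that holds uniformly over all coordinates (so the exponential mechanism sees a single sensitivity), correctly handling the data-dependence of the iterate $B^k$ within the adaptive-composition framing, and verifying that the $\sqrt{M_{i,j}}$ rescaling interacts with the prox step exactly as claimed --- the zCDP bookkeeping and the closing algebra are then routine.
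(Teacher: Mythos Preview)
Your proposal is correct and follows essentially the same route as the paper: isolate the data-touching step (local DP-PGCD), bound the score sensitivity via the non-expansiveness of the prox to get $\Delta_S \le \Delta/\sqrt{M_{i,j}}$, use the bounded-range zCDP bound for the exponential/Gumbel-max selection and the standard Gaussian-mechanism zCDP for the update, equalize the two via $\beta=\sigma/\sqrt{M_{i,j}}$, compose additively over $KT$ iterations, and invert the zCDP-to-$(\varepsilon,\delta)$ conversion. Your extra care about adaptive composition (since $B^k$ depends on earlier releases) and about the score sensitivity having to be uniform across coordinates are points the paper leaves implicit, but the argument is otherwise the same.
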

\begin{proof}[Sketch of proof]
We first show that gradients have $\ell_2$-sensitivity bounded by $\Delta$, and that scores have sensitivity bounded by $\Delta/\sqrt{M_{i,j}}$, leveraging the non-expansiveness of the $\operatorname{prox}$. Each DP-PGCD iteration applies two private mechanisms: (i) coordinate selection via the exponential mechanism (with Gumbel noise), and (ii) coordinate update via Gaussian noise. With $\beta = \sigma/\sqrt{M_{i,j}}$, each one satisfies $\rho$-zCDP, where $\rho = \tfrac{\Delta^2}{2\beta^2}$. By additive composition, running $T$ rounds of \dpmethod, each with $K$ updates of DP-PGCD, yields total privacy loss $\rho_{\mathrm{tot}} = 2K\rho = \tfrac{K\Delta^2}{\beta^2}$. Converting to $(\varepsilon,\delta)$-DP gives $\varepsilon = \rho_{\mathrm{tot}} + 2\sqrt{\rho_{\mathrm{tot}} \log(1/\delta)}$, which determines the stated noise scales. Full derivations are in Appendix \ref{app:privacy-proof}.
\end{proof}


\textbf{Gradient clipping.} Since coordinate-wise Lipschitz constants are difficult to bound tightly, in practice we instead enforce bounded sensitivity through gradient clipping. Following \citet{Mangold2022a}, each gradient coordinate $\nabla_{i,j} {\mathcal{L}}(B^{k};X^{(p)})$ is clipped at $C_{i,j}=\sqrt{M_{i,j}/\sum_{i,j}M_{ij}}C$ for some global threshold $C>0$. This scheme adjusts for the relative scale of each coordinate while reducing tuning to a single parameter.

\textbf{Knowledge of $M_{i,j}$.} We assume that the coordinate-wise smoothness constants $M_{i,j}$ are available (or upper bounded). When this is not the case, they can be privately estimated using the Gaussian mechanism. Further details are provided in Appendices~\ref{app:smoothness_theo} and \ref{app:smoothness_expe}.

\subsection{Extension to More General Bayesian Networks}
While this work focuses on linear Gaussian BNs—a setting where DAG identifiability is well understood and strong baselines allow us to isolate the contribution of our FL/DP design—our framework is not fundamentally limited to this class.

Our approach can be naturally extended to broader classes of BNs, provided the following conditions hold: (i) the underlying DAG is identifiable under suitable assumptions, (ii) the global loss $L(W;X)$ decomposes as a sum of local losses $L(B_p; X^{(p)})$ across participants, (iii) the smooth part of each local score $L(B_p; X^{(p)})$ is differentiable and coordinate-wise smooth (so that the PGCD solver can be applied), and (iv) the corresponding gradients can be bounded to derive sensitivity bounds for the DP mechanisms.

When these conditions are satisfied, extending our framework only requires replacing the squared loss with the appropriate score function and re-deriving the associated smoothness constants and sensitivity bounds.

\section{EXPERIMENTS}
\label{sec:expe}

We empirically evaluate \method and its differentially private variant \dpmethod on synthetic data and on real data. We assess the effectiveness of our method in terms of convergence, structural accuracy of the estimated DAG, communication efficiency and privacy-utility trade-offs. The code is available at \url{https://gitlab.com/ghitafassy/fed-sparse-bnsl/}.

\subsection{Experimental Setup}

\textbf{Datasets} We consider the following datasets.

\textit{Homogeneous synthetic data.} Following \citet{notears} and \citet{fed-notears}, we generate Erdös-Rényi random DAGs with $d=20$ nodes and an expected number of $d$ edges. Observations are drawn from a linear Gaussian BN with equal noise variance across variables. Data are partitioned across $P=8$ participants, each with $n_p=5000$ samples. We also consider higher-dimensional versions with $d$ up to $200$. Details are provided in Appendix \ref{app:data_homo}.

\textit{Heterogenous synthetic data.} We use the same DAG generation, but allow participant-specific regression coefficients, while keeping the underlying DAG structure shared. We consider $P=5$ participants, each with $n_p=5000$ samples.
This setting allows to evaluate both the structural accuracy of the estimated shared DAG, and the ability to estimate participant-specific coefficients.
Details are in Appendix \ref{app:data_hetero}.

\begin{figure*}[t]
    \centering  
    \begin{subfigure}[t]{0.8\textwidth}
    \centering
    \includegraphics[width=1\textwidth]{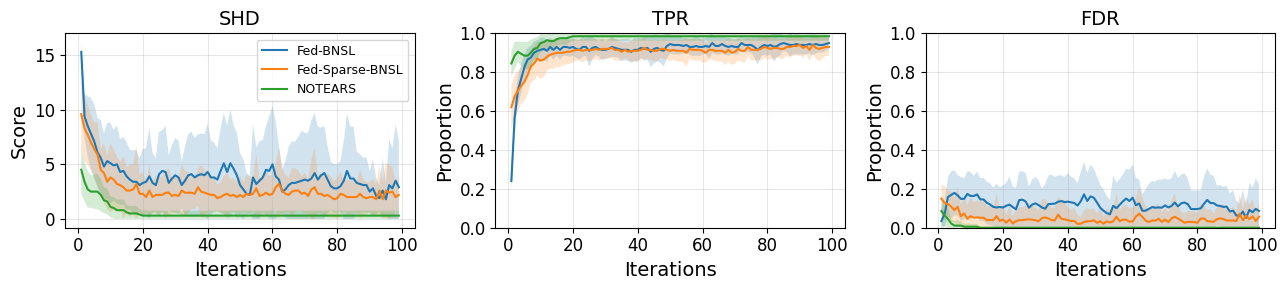} 
    \label{fig:non-DP}
    \end{subfigure}
    \begin{subfigure}[t]{0.8\textwidth}
    \centering
    \vspace*{-.3cm}
    \includegraphics[width=1\textwidth]{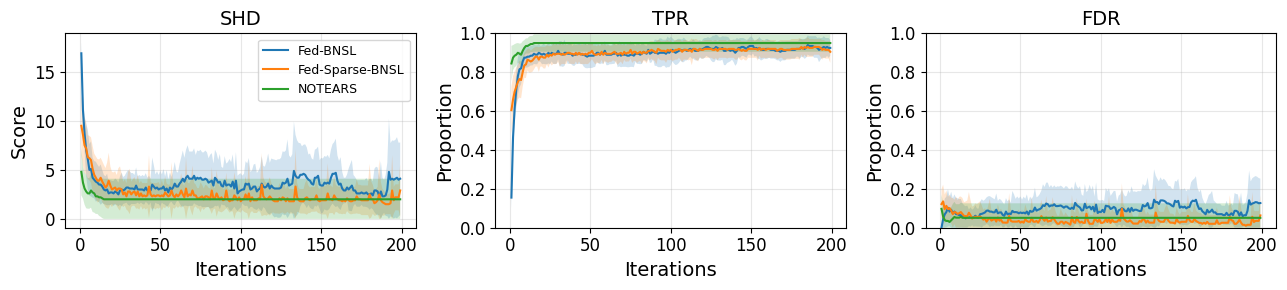} 
    \label{fig:cv_hetero}
    \end{subfigure}
    \caption{Convergence of \method, \prevmethod and centralized NOTEARS. Top: homogeneous synthetic data; Bottom: heterogeneous synthetic data. We report SHD, TPR and FDR across iterations.}
    \label{fig:conv}
\end{figure*}

\textit{Real data.} As in previous work \citep{notears,fed-notears}, we use the Sachs protein signaling dataset \citep{Sachs}, which has $d=11$ variables, $n=7466$ samples, and a ground-truth DAG with $18$ edges. To simulate a federated setting, we split the data across $P=3$ participants.

\textbf{Metrics.} We consider the following metrics.

\textit{Structural accuracy}. We assess the quality of the estimated DAGs using standard structure learning metrics. The Structural Hamming Distance (SHD) counts the number of edge insertions, deletions or reversals needed to convert the estimated DAG into the true one.
The True Positive Rate (TPR) represents the proportion of true edges that are correctly recovered, while the False discovery rate (FDR) represents the proportion of incorrect edges among predicted edges.

\textit{Communication cost.} We measure the total cost as the number of bytes transmitted between the server and the participants throughout the entire training procedure, reported in megabytes (MB).

\textit{Personalization.} In the heterogeneous setting, after learning the shared structure, we compute per-participant normalized mean squared error (MSE) between the true regression coefficients and those estimated after local refitting. 

\textbf{Evaluation protocol.}
We report the mean and standard deviation (shaded bands in figures) across the $10$ runs. For each configuration of dataset and privacy budget, hyperparameters are tuned separately for each method. The procedure, search ranges and final values are detailed in Appendix \ref{app:hyperparams}.\ab{organiser les appendix pour que les refs soient bonnes}

\subsection{Non-Private Setting}

We evaluate the performance of \method on synthetic data and compare to \prevmethod \citep{fed-notears}. We also include centralized NOTEARS as a reference point, which corresponds to running NOTEARS on the union of all participants' data and thus represents the best achievable performance.

\textbf{Convergence on homogeneous data.}
Figure~\ref{fig:conv} (top) shows that \method consistently achieves lower SHD than \prevmethod across iterations and stabilizes earlier, indicating faster and more stable convergence. Variability is also narrower for \method. Both methods quickly reach high TPR and remain close thereafter, but \method maintains a lower FDR throughout training. This gap persists over iterations and is accompagnied by tighter variability, suggesting that \method avoids over-selecting spurious edges while converging to more accurate structures.

\textbf{Convergence on heterogeneous data.} 
As shown in Figure \ref{fig:conv} (bottom), \method maintains its advantages on heterogeneous data: it achieves lower SHD with earlier stabilization, matches \prevmethod in TPR, and consistently attains lower FDR, demonstrating accurate and robust recovery of the consensus structure despite cross-participant variability.
\begin{table*}[t]
\centering
\small
\begin{tabular}{lccccc}
\toprule
Method & Dimension $d$ & Communication cost & SHD & TPR & FDR \\
\midrule
\method      & $20$ & $1.99 \pm 0.28$ & $2.2 \pm 1.81$ & $0.93 \pm 0.048$ & $0.057 \pm 0.078$\\
\prevmethod  & $20$ & $5.12 \pm 0$ & $2.9 \pm 4.483$ & $0.95 \pm 0.033$ & $0.086 \pm 0.145$ \\
\midrule
\method  & $200$ & $13.7 \pm 0.8$ & $27.4 \pm 14.516$ & $0.922 \pm 0.03$ & $0.084 \pm 0.05$ \\
\prevmethod & $200$ & $512 \pm 0$ & $50.7 \pm 12.859$ & $0.934 \pm 0.007$ & $0.19 \pm 0.045$ \\
\bottomrule
\end{tabular}
\caption{Communication costs (MB) and structural accuracy for $d=20$ and $d=200$.}
\label{tab:comm}
\end{table*}

\textbf{Communication efficiency.} 
Communication costs are reported in Table \ref{tab:comm}.
For $d=20$, \method reduces communication by approximately $64\%$ compared to \prevmethod, while maintaining strong structural accuracy. As expected, for $d=200$, the difference is even more pronounced: \method achieves roughly $97.2\%$ reduction in communication costs and clearly outperforms \prevmethod, with lower SHD and FDR while keeping TPR high. These results highlight the effectiveness of \method’s design, demonstrating substantial communication savings without sacrificing performance.
\begin{figure*}[t]
    \centering  
\includegraphics[width=.8\textwidth]{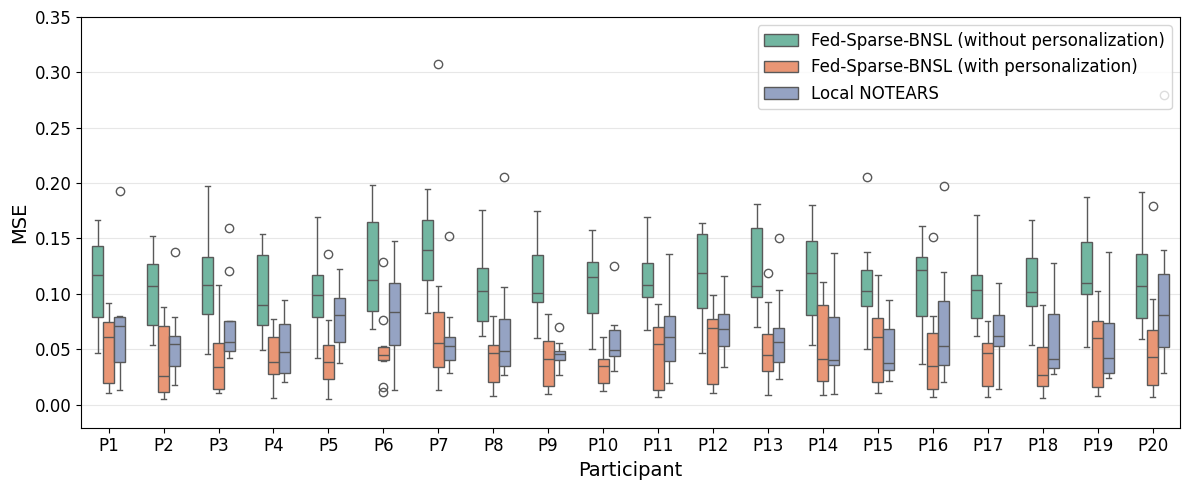} 
\caption{Participant-level personalization: per-participant normalized MSE between true and estimated parameters, for \method with and without personalization, compared to local NOTEARS.\label{fig:MSE_fed_solo}}
\end{figure*}

\textbf{Participant-level personalization.}
For this experiment, we consider a low-sample
setting with $20$ participants, each holding only $50$ samples.
We compare three approaches: (i) \method without personalization, using the consensus weighted adjacency matrix learned collectively, (ii) \method with personalization, where each participant locally refits the edge weights via linear regression on its own data after fixing the learned DAG structure and (iii) local NOTEARS, where each participant independently runs centralized NOTEARS on its own data. Figure \ref{fig:MSE_fed_solo} shows the per-participant normalized MSE between estimated and ground-truth participant-specific weights for each approach, showing that personalization consistently reduces MSE across participants, with lower medians in all boxplots. This demonstrates that, despite heterogeneity in edge weights, learning the network structure collectively provides a strong foundation, and subsequent local refitting produces participant-specific estimates that closely align with the ground truth.

Furthermore, while local NOTEARS already achieves reasonable MSE across participants, indicating that client-level estimates are not degenerate, \method with personalization consistently achieves the lowest median errors. This performance gap is deeply rooted in the quality of the underlying learned DAG: as detailed in Appendix~\ref{app:local_NOTEARS} (Figure~\ref{fig:local_vs_fed}), \method attains significantly better structural metrics than local NOTEARS. This confirms that aggregating information across participants improves structural recovery beyond what is achievable locally.

\subsection{Private Setting}

\begin{figure*}[t]
    \centering  
\includegraphics[width=0.8\textwidth]{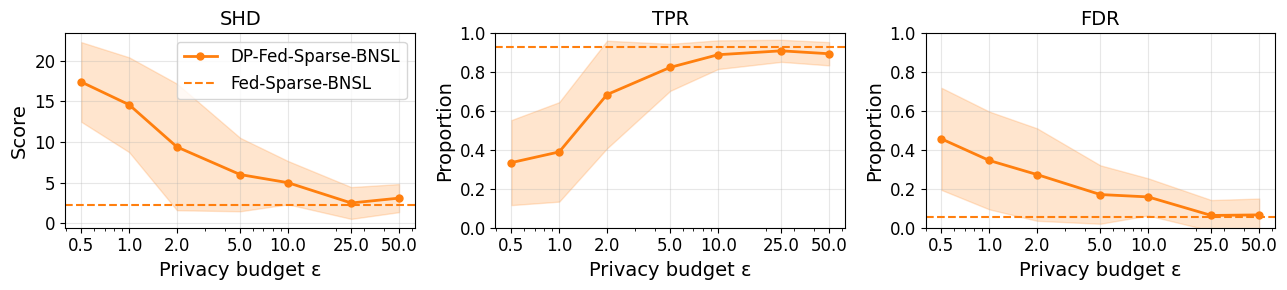} 
\caption{Privacy-utility trade-off: SHD, TPR and FDR of DP-\method under varying privacy budgets $\varepsilon$, compared to non-private \method.}
    \label{fig:dp_eps}
\end{figure*}

\begin{figure*}[t]
    \centering  
    \includegraphics[width=0.8\textwidth]{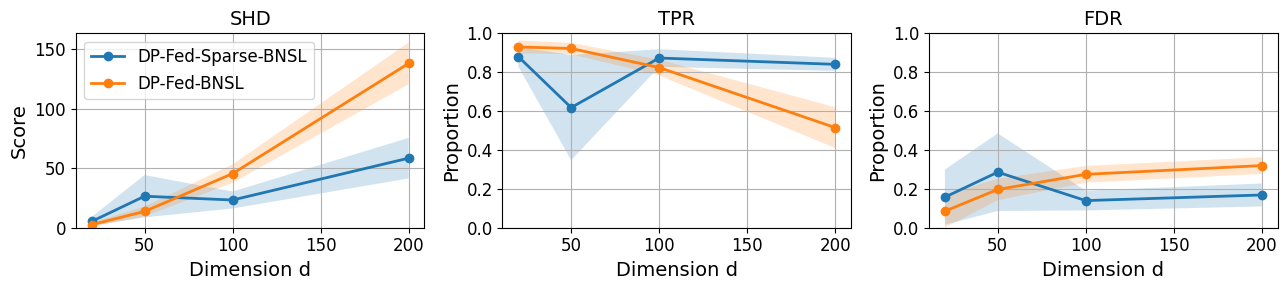} 
    \caption{Dimensional robustness: performance of \dpmethod vs DP-\prevmethod as dimension $d$ increases, under fixed privacy budget $\varepsilon=10$.}
    \label{fig:DP_dim}
\end{figure*}

\textbf{Privacy-utility trade-off.}
We compare our non-private algorithm (\method) to its differentially private variant (\dpmethod) on homogeneous synthetic data, varying the privacy budget $\varepsilon \in [0.5, 1, 2, 5, 10, 25, 50]$ while keeping $\delta=\frac{1}{n_p^2}$ fixed.
The results are shown in Figure \ref{fig:dp_eps}, where \method is shown as a dashed reference to visualize the utility gap under privacy.
In the high-privacy regime ($\varepsilon \leq 2$), \dpmethod exhibits higher SHD and lower TPR, while FDR is highest. As $\varepsilon$ increases, SHD decreases and TPR rises monotonically, approaching the non-private baseline. For moderate to large budgets ($\varepsilon \geq 5$), SHD and TPR are close to \method, variance shrinks, and FDR drops toward the non-private reference. Overall, increasing $\varepsilon$ narrows the privacy-utility gap, with $\varepsilon = 5$ providing a strong compromise and $\varepsilon = 10$ nearly matching the non-private performance.

We evaluate the effect of dimensionality under differential privacy by comparing \dpmethod to a baseline, DP-\prevmethod, in which each participant privatizes its covariance matrix using the Gaussian mechanism \citep{DBLP:conf/uai/Wang18} before running \prevmethod. Using homogeneous synthetic data, we vary the dimension $d \in [20, 50, 100, 200]$ while keeping the number of participants $P = 8$ and the per-participant sample size $n_p = 5000$ fixed. The privacy budget is set to $\varepsilon = 10$ for all dimensions.

As shown in Figure~\ref{fig:DP_dim}, SHD increases with $d$ for both methods, reflecting the growth in the number of edges. At small dimensions ($d = 20, 50$), DP-\prevmethod performs similarly to \dpmethod. However, its structural accuracy deteriorates rapidly as $d$ increases. At $d = 200$, DP-\prevmethod’s SHD is roughly twice that of \dpmethod (about 140 vs. 70), and its TPR drops to around 0.5 compared to over 0.8 for \dpmethod. This is expected, since privatizing full covariance matrices scales quadratically with dimension, whereas \dpmethod handles higher-dimensional settings more effectively due to its greedy approach.

\subsection{Real data}
We evaluate \method, \prevmethod and their DP variants on the Sachs dataset \citep{Sachs}, a well-known benchmark for structure learning with a biologically validated ground-truth DAG. To simulate a federated setting, the data was partitioned across $3$ participants. Experimental details and hyperparameter settings are provided in Appendix \ref{app:real_data}.

Both \method and \prevmethod recover a DAG with $18$ edges, with SHD$=20$ for \method and SHD$=23$ for \prevmethod, consistent with previously reported NOTEARS results ($16$ edges, SHD$=22$). Direct comparison with \prevmethod from original paper is not possible since their experiments were conducted on a smaller dataset.
As we have no guarantee that the ground-truth DAG is identifiable in real datasets, we also assess the Markov equivalence class, by looking at the skeleton (undirected edges) and v-structures. For a detailed introductions to these notions, we refer the reader to \citet{BN}.
Neither method recovers ground-truth v-structures. However, \method correctly identifies $12$ out of $18$ undirected edges, compared to $8$ out of $18$ undirected edges for \prevmethod. 

For the DP variants, we fix the privacy budget to $\varepsilon=5, \delta=\frac{1}{n_p^2}$. Both methods experienced a moderate performance drop: \dpmethod achieved SHD$=21$ while still recovering $11$ correct edges out of $18$, whereas DP-\prevmethod obtained SHD$=25$ with $8$ correct edges out of $19$ estimated.

\section{CONCLUSION}
\label{sec:conclu}

We introduced \method, a federated method for learning linear Gaussian Bayesian networks that addresses privacy and communication challenges. By combining sparse, greedy updates with differential privacy, \method achieves accurate structure recovery with low communication overhead. Experiments on synthetic and real datasets demonstrate its effectiveness, scalability, and support for participant-level personalization. 

Providing formal convergence guarantees for our proposed algorithms is an interesting direction for future work. While non-private PGCD enjoys linear convergence \citep{karimireddy2019Efficient}, extending these guarantees to the differentially private setting introduces significant technical challenges. As noted by \citet{mangold}, deriving convergence guarantees for private greedy coordinate descent algorithms with proximal operators remains an open problem. Moreover, even in the centralized setting and assuming exact solutions to the subproblems, the convergence of the NOTEARS formulation itself is not fully established: the acyclicity constraint violates the regularity conditions typically required for global convergence guarantees in augmented Lagrangian methods \citep{conv_notears}.\looseness=-1

In future work, we also plan to extend our approach to hybrid Bayesian networks to handle data with both continuous and discrete features. 

\newpage

\subsubsection*{Acknowledgements}
We thank the anonymous reviewers for their valuable feedback and constructive comments. 

This work was partially supported by L'Oréal, by grant ANR-20-CE23-0015 (Project PRIDE), and by the ANR 22-PECY-0002 IPOP (Interdisciplinary Project on Privacy) project of the Cybersecurity PEPR.

Some experiments presented in this paper were carried out using the Grid'5000 testbed, supported by a scientific interest group hosted by Inria and including CNRS, RENATER and several Universities as well as other organizations (see \url{https://www.grid5000.fr}).

\bibliography{biblio}




\section*{Checklist}



\begin{enumerate}

  \item For all models and algorithms presented, check if you include:
  \begin{enumerate}
    \item A clear description of the mathematical setting, assumptions, algorithm, and/or model. \textbf{Yes}
    \item An analysis of the properties and complexity (time, space, sample size) of any algorithm. \textbf{No}
    \item (Optional) Anonymized source code, with specification of all dependencies, including external libraries. \textbf{Yes}
  \end{enumerate}

  \item For any theoretical claim, check if you include:
  \begin{enumerate}
    \item Statements of the full set of assumptions of all theoretical results. \textbf{Yes}
    \item Complete proofs of all theoretical results. \textbf{Yes}
    \item Clear explanations of any assumptions. \textbf{Yes}     
  \end{enumerate}

  \item For all figures and tables that present empirical results, check if you include:
  \begin{enumerate}
    \item The code, data, and instructions needed to reproduce the main experimental results (either in the supplemental material or as a URL). \textbf{Yes}
    \item All the training details (e.g., data splits, hyperparameters, how they were chosen). \textbf{Yes}
    \item A clear definition of the specific measure or statistics and error bars (e.g., with respect to the random seed after running experiments multiple times). \textbf{Yes}
    \item A description of the computing infrastructure used. (e.g., type of GPUs, internal cluster, or cloud provider). \textbf{Yes}. 
  \end{enumerate}

  \item If you are using existing assets (e.g., code, data, models) or curating/releasing new assets, check if you include:
  \begin{enumerate}
    \item Citations of the creator If your work uses existing assets. \textbf{Yes}, as detailed in Appendix \ref{app:add-info}.
    
    \item The license information of the assets, if applicable. \textbf{Yes}, as detailed in Appendix \ref{app:add-info}. 
    \item New assets either in the supplemental material or as a URL, if applicable. \textbf{Not Applicable}
    \item Information about consent from data providers/curators. \textbf{Not Applicable}
    \item Discussion of sensible content if applicable, e.g., personally identifiable information or offensive content. \textbf{Not Applicable}
  \end{enumerate}

  \item If you used crowdsourcing or conducted research with human subjects, check if you include:
  \begin{enumerate}
    \item The full text of instructions given to participants and screenshots. \textbf{Not Applicable}
    \item Descriptions of potential participant risks, with links to Institutional Review Board (IRB) approvals if applicable. \textbf{Not Applicable}
    \item The estimated hourly wage paid to participants and the total amount spent on participant compensation. \textbf{Not Applicable}
  \end{enumerate}

\end{enumerate}
\onecolumn
\newpage
\appendix
\section{Theoretical Derivations and Proofs}
\subsection{Reconstruction of Covariance Matrix}
\label{app:cov}
Starting from the closed-form solution of the local subproblem in participant $p$, 
\[
B = (\Sigma + \rho I_d)^{-1} (\rho W - \beta + \Sigma),
\]
we can isolate $\Sigma$ by multiplying both sides by $\Sigma + \rho I_d$ and rearranging terms: 
\[
(\Sigma + \rho I_d) B  = \rho W - \beta + \Sigma \quad \Rightarrow \quad \Sigma (B - I_d) = \rho(W - B) - \beta.
\]

Now, assume for the moment that $B - I_d$ is invertible. Then, the covariance matrix can be written as
\[
\Sigma = (B - I_d)^{-1} \big(\rho (W - B) - \beta \big).
\]

Regarding the invertibility of $B-I_d$, note that at the first iteration where $W=0$ and $\beta=0$, we have
\[
B - I_d = (\Sigma + \rho I_d)^{-1} \Sigma - I_d = -\rho(\Sigma + \rho I_d)^{-1}.
\]
This matrix is invertible because $\Sigma + \rho I_d$ is strictly positive definite: $\Sigma$ is the empirical covariance matrix (positive semidefinite) and $\rho > 0$. Consequently, $B - I_d$ is invertible at the first iteration.\footnote{In practice, we find that this property generally holds also in subsequent iterations.}

The above derivation shows that the server can reconstruct the empirical covariance $\Sigma$ from $B, W, \beta$ and $\rho$ at the first iteration of the algorithm.

\subsection{Privacy Proofs}
\label{app:privacy-proof}
\subsubsection{Preliminaries on Differential Privacy}
We recall the main definitions and results used in our privacy analysis.  

\begin{definition}[\citealt{10.1007/978-3-662-53641-4_24}]
\label{def:zCDP}
A randomized algorithm $\mathcal{A}$ satisfies $\rho$-zero-concentrated differential privacy ($\rho$-zCDP), if for any two datasets $D_1$ and $D_2$ of fixed size that differ in exactly one record  and all $\alpha \in (1, \infty)$, we have :
\[
    D_\alpha(\mathcal{A}(D_1) \| \mathcal{A}(D_2))\leq \rho\alpha
\]
where $D_\alpha(\mathcal{A}(D_1) \| \mathcal{A}(D_2))$ is the $\alpha$-Rényi divergence between the distributions of $\mathcal{A}(D_1)$ and $\mathcal{A}(D_2)$.
\end{definition}

\begin{theorem}[\citealt{10.1007/978-3-662-53641-4_24}]
    \label{thm:compo-zCDP}
    If $\mathcal{A}_1$ satisfies $\rho^{(1)}$-zCDP and $\mathcal{A}_2$ satisfies $\rho^{(2)}$-zCDP, then the composition of $\mathcal{A}_1$ and $\mathcal{A}_2$ satisfies $(\rho^{(1)} + \rho^{(2)})$-zCDP.
\end{theorem}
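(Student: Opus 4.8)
The plan is to work directly from the Rényi-divergence definition of zCDP (Definition~\ref{def:zCDP}) and exploit the factorization of the composed mechanism's output density into a marginal and a conditional. Fix two neighboring datasets $D_1, D_2$ and write the composition as $\mathcal{A}(D) = (Y_1, Y_2)$, where $Y_1 = \mathcal{A}_1(D)$ and $Y_2$ is the output of $\mathcal{A}_2$ on $D$, possibly using $Y_1$ as an auxiliary input (adaptive composition). Denote by $P$ and $Q$ the output distributions of $\mathcal{A}(D_1)$ and $\mathcal{A}(D_2)$, and factor each joint density as $P(y_1,y_2) = P_1(y_1)\,P_{2\mid 1}(y_2\mid y_1)$, and likewise for $Q$. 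The goal is to establish $D_\alpha(P\|Q) \leq (\rho^{(1)}+\rho^{(2)})\alpha$ for every $\alpha \in (1,\infty)$, which is precisely the asserted $(\rho^{(1)}+\rho^{(2)})$-zCDP guarantee.

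First I would express the order-$\alpha$ Rényi divergence in its exponentiated integral form, $e^{(\alpha-1)D_\alpha(P\|Q)} = \int\!\!\int P(y_1,y_2)^\alpha Q(y_1,y_2)^{1-\alpha}\,dy_1\,dy_2$, and substitute the factorizations. Because the exponents distribute over the product of marginal and conditional, the double integral separates into an outer integral over $y_1$ with weight $P_1(y_1)^\alpha Q_1(y_1)^{1-\alpha}$ and, for each fixed $y_1$, an inner integral $\int P_{2\mid 1}(y_2\mid y_1)^\alpha Q_{2\mid 1}(y_2\mid y_1)^{1-\alpha}\,dy_2$. This inner integral is exactly $e^{(\alpha-1)D_\alpha(P_{2\mid1}(\cdot\mid y_1)\,\|\,Q_{2\mid1}(\cdot\mid y_1))}$, so the entire argument reduces to bounding the two divergences appearing in these two pieces.

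The next step is to bound each piece using the hypotheses. For every fixed $y_1$, the conditional second-stage mechanism $\mathcal{A}_2(\cdot\mid y_1)$ run on the neighboring pair satisfies $\rho^{(2)}$-zCDP, so the inner Rényi divergence is at most $\rho^{(2)}\alpha$; since $\alpha-1>0$, the inner integral is bounded by $e^{(\alpha-1)\rho^{(2)}\alpha}$ uniformly in $y_1$, which lets this factor be pulled outside the outer integral. The remaining outer integral equals $e^{(\alpha-1)D_\alpha(P_1\|Q_1)}$, which by the $\rho^{(1)}$-zCDP guarantee of $\mathcal{A}_1$ is at most $e^{(\alpha-1)\rho^{(1)}\alpha}$. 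Multiplying the two bounds, taking logarithms, and dividing by $\alpha-1>0$ yields $D_\alpha(P\|Q) \leq (\rho^{(1)}+\rho^{(2)})\alpha$ for all $\alpha \in (1,\infty)$, completing the argument.

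I expect the main subtlety to lie in the adaptive case: one must verify that the conditional second-stage mechanism satisfies the $\rho^{(2)}$-zCDP bound \emph{uniformly} over every realization $y_1$ of the first output, not merely on average. This uniformity is exactly what makes the factor $e^{(\alpha-1)\rho^{(2)}\alpha}$ independent of $y_1$ and hence cleanly separable from the outer integral. A secondary point of care is tracking the sign condition $\alpha-1>0$ when converting the multiplicative integral bound into an additive divergence bound, along with the measure-theoretic regularity (absolute continuity of $P$ with respect to $Q$) needed for the densities and their factorizations to be well-defined; the non-adaptive product case then follows as the special instance where $P_{2\mid1}$ does not depend on $y_1$.
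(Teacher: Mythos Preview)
Your argument is correct and is essentially the standard proof of the zCDP composition theorem from \citet{cdp}: factor the joint density, bound the inner conditional R\'enyi divergence uniformly in $y_1$ using the $\rho^{(2)}$-zCDP assumption on $\mathcal{A}_2$, then bound the outer marginal divergence using the $\rho^{(1)}$-zCDP assumption on $\mathcal{A}_1$. The care you take with the adaptive case and the sign of $\alpha-1$ is appropriate.

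However, there is nothing in the paper to compare against. Theorem~\ref{thm:compo-zCDP} is stated in the preliminaries (Appendix~\ref{app:privacy-proof}) as a result quoted from \citet{cdp}, without proof; the paper only \emph{uses} it (in the proofs of Theorems~\ref{thm:fed-dp} and~\ref{thm:smoothness_cst}) to add up the per-query zCDP parameters. So your proposal is a valid reconstruction of the cited result, but it does not correspond to any argument actually given in this paper.
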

\begin{theorem}[\citealt{10.1007/978-3-662-53641-4_24}]
\label{thm:zCDP-to-DP}
    If $\mathcal{A}$ satisfies $\rho$-zCDP, then $\mathcal{A}$ satisfies $(\rho + 2\sqrt{\rho\log(1/\delta)}, \delta)$-DP for any $\delta > 0$.
\end{theorem}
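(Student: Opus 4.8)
The plan is to pass through the \emph{privacy loss random variable} and a Chernoff bound, exploiting the fact that the Rényi divergence is exactly the normalized log–moment generating function of the privacy loss. Fix two datasets $D_1, D_2$ differing in exactly one record, let $p_1, p_2$ denote the densities of $\mathcal{A}(D_1)$ and $\mathcal{A}(D_2)$, and define the privacy loss $Z = \log\!\big(p_1(O)/p_2(O)\big)$ for $O \sim \mathcal{A}(D_1)$. The first step is the identity
\[
\mathbb{E}\left[e^{(\alpha-1)Z}\right] = \exp\left((\alpha-1)\,D_\alpha(\mathcal{A}(D_1)\,\|\,\mathcal{A}(D_2))\right),
\]
which follows directly from the definition of the $\alpha$-Rényi divergence by writing $\int p_1^\alpha p_2^{1-\alpha} = \mathbb{E}_{O\sim\mathcal{A}(D_1)}[(p_1/p_2)^{\alpha-1}]$. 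Combining this with the $\rho$-zCDP hypothesis (Definition~\ref{def:zCDP}), namely $D_\alpha \le \rho\alpha$, gives the moment bound $\mathbb{E}[e^{(\alpha-1)Z}] \le e^{(\alpha-1)\rho\alpha}$ for every $\alpha > 1$.

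Next I would apply a Chernoff-style tail bound to $Z$. For any $\alpha > 1$ and any threshold $\varepsilon$, monotonicity of the exponential followed by Markov's inequality gives
\[
P(Z > \varepsilon) \le e^{-(\alpha-1)\varepsilon}\,\mathbb{E}\left[e^{(\alpha-1)Z}\right] \le \exp\left((\alpha-1)(\rho\alpha - \varepsilon)\right).
\]
The crux of the argument is to optimize the right-hand side over the Rényi order $\alpha$. Writing $t = \alpha - 1 > 0$, the exponent becomes the quadratic $\rho t^2 + (\rho - \varepsilon)t$, minimized at $t^\star = (\varepsilon - \rho)/(2\rho)$, which yields $P(Z > \varepsilon) \le \exp(-(\varepsilon - \rho)^2/(4\rho))$. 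Setting this probability equal to $\delta$ and solving $(\varepsilon - \rho)^2 = 4\rho\log(1/\delta)$ recovers exactly $\varepsilon = \rho + 2\sqrt{\rho\log(1/\delta)}$. Since $\delta < 1$ forces $\log(1/\delta) > 0$, we have $\varepsilon > \rho$, so $t^\star > 0$ lies in the admissible range $\alpha > 1$ and the optimization is legitimate.

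Finally I would convert this tail bound on the privacy loss into an $(\varepsilon, \delta)$-DP guarantee via the standard privacy-loss decomposition: on the event $\{Z \le \varepsilon\}$ we have $p_1(o) \le e^\varepsilon p_2(o)$, so for any measurable $O$,
\[
P(\mathcal{A}(D_1) \in O) \le e^\varepsilon \int_{O \cap \{Z \le \varepsilon\}} p_2 \,+\, P(Z > \varepsilon) \le e^\varepsilon\,P(\mathcal{A}(D_2) \in O) + \delta.
\]
Because ``differs in exactly one record'' is a symmetric relation, the hypothesis also gives $D_\alpha(\mathcal{A}(D_2)\,\|\,\mathcal{A}(D_1)) \le \rho\alpha$, so repeating the argument with the roles of $D_1$ and $D_2$ swapped yields the reverse inequality, completing the claim. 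The main obstacle is the optimization over $\alpha$ in the second step: it is precisely what produces the exact constant $2\sqrt{\rho\log(1/\delta)}$, and it requires checking that the minimizer $t^\star$ is strictly positive — this is where the implicit restriction $\delta < 1$ (under which the bound is nonvacuous) enters. The moment identity and the closing privacy-loss-to-DP conversion are routine once this optimization is in hand.
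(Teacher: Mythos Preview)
The paper does not actually prove this theorem: it is stated in Appendix~A.2.1 as a preliminary result quoted from \citet{cdp}, with no proof given. So there is nothing in the paper to compare against at the level of argument.

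That said, your proposal is correct and is essentially the standard proof from the cited source. The moment identity $\mathbb{E}[e^{(\alpha-1)Z}]=e^{(\alpha-1)D_\alpha}$, the Chernoff bound, the optimization in $t=\alpha-1$ yielding $\exp(-(\varepsilon-\rho)^2/(4\rho))$, and the final privacy-loss decomposition are exactly the steps used by Bun and Steinke. Your check that $t^\star>0$ requires $\delta<1$ is the right way to handle the admissibility of the optimizer; for $\delta\ge 1$ the $(\varepsilon,\delta)$-DP statement is vacuous anyway. The only mild omission is the implicit assumption that $\mathcal{A}(D_1)$ and $\mathcal{A}(D_2)$ admit densities with respect to a common dominating measure so that the privacy loss $Z$ is well defined, but this is a standard technicality and the zCDP hypothesis already forces mutual absolute continuity.
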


\begin{theorem}[\citealt{10.1007/978-3-662-53641-4_24}]
    \label{thm:gauss}
    Let $\mathcal{A}$ be the Gaussian mechanism applied to a query with $\ell_2$-sensitivity $\Delta$, using noise parameter $\sigma$. Then $\mathcal{A}$ satisfies $\rho$-zCDP with $\rho = \frac{\Delta^2}{2 \sigma^2}$.
\end{theorem}

\begin{theorem}[\citealt{exp-zcdp}]
\label{thm:zCDP_gumbel}
    Let $\mathcal{A}$ be the Gumbel max trick mechanism applied to a query with $\ell_1$-sensitivity $\Delta$, using noise parameter $\beta$. Then $\mathcal{A}$ satisfies $\rho$-zCDP with $\rho = \frac{\Delta^2}{2 \beta^2}$.
\end{theorem}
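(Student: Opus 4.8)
The plan is to exploit the well-known equivalence between the Gumbel-max trick and the exponential mechanism, and then control the R\'enyi divergence of the latter through its bounded-range structure. First I would recall the Gumbel-max identity: if $\tilde S_j = S_j + G_j$ with $G_j \sim \mathrm{Gumbel}(0,\beta)$ drawn independently across candidate outputs $j$, then $\operatorname*{arg\,max}_j \tilde S_j$ equals $j$ with probability exactly $p_D(j) = \exp(S_j(D)/\beta)/\sum_k \exp(S_k(D)/\beta)$. Thus the mechanism $\mathcal{A}$ coincides with the exponential mechanism with scores $S$ and temperature $\beta$, and it suffices to analyze this categorical output distribution on two neighboring datasets $D, D'$.

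Next I would establish that $\mathcal{A}$ is bounded-range. Writing $Z(D) = \sum_k \exp(S_k(D)/\beta)$ and defining the privacy-loss random variable $\mathcal{L}(o) = \ln(p_D(o)/p_{D'}(o)) = (S_o(D)-S_o(D'))/\beta + \ln(Z(D')/Z(D))$, the key observation is that the normalizing term $\ln(Z(D')/Z(D))$ is constant in $o$ and therefore cancels in any difference $\mathcal{L}(o) - \mathcal{L}(o')$. Hence $\mathcal{L}(o) - \mathcal{L}(o') = [(S_o(D)-S_o(D')) - (S_{o'}(D)-S_{o'}(D'))]/\beta$, and since each per-output score changes by at most the sensitivity---bounded by the $\ell_1$-sensitivity $\Delta$, as $\max_j |S_j(D)-S_j(D')| \le \|S(D)-S(D')\|_1 = \Delta$---the range of $\mathcal{L}$ is at most $\eta := 2\Delta/\beta$.

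The technical heart, and the step I expect to be the main obstacle, is converting this bounded range into the clean $\rho\alpha$ form required by zCDP (Definition \ref{def:zCDP}) rather than the weaker $\rho\,\alpha^2/(\alpha-1)$ bound that a naive Hoeffding application yields. The plan is to write $D_\alpha(p_D \| p_{D'}) = \frac{1}{\alpha-1}\ln \mathbb{E}_{p_D}[e^{(\alpha-1)\mathcal{L}}]$ via a change of measure, and to exploit two facts about $\mathcal{L}$ under $p_D$: (i) since $\mathbb{E}_{p_D}[e^{-\mathcal{L}}] = 1$, Hoeffding's lemma applied to the $\eta$-bounded variable $\mathcal{L}$ at parameter $s=-1$ forces $\mathbb{E}_{p_D}[\mathcal{L}] = \mathrm{KL}(p_D \| p_{D'}) \le \eta^2/8$; and (ii) Hoeffding's lemma at parameter $s=\alpha-1$ gives $\ln \mathbb{E}_{p_D}[e^{(\alpha-1)\mathcal{L}}] \le (\alpha-1)\mathbb{E}_{p_D}[\mathcal{L}] + (\alpha-1)^2\eta^2/8$. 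Combining (i) and (ii) and dividing by $\alpha-1$ yields $D_\alpha(p_D \| p_{D'}) \le \mathbb{E}_{p_D}[\mathcal{L}] + (\alpha-1)\eta^2/8 \le \alpha\,\eta^2/8$. Substituting $\eta = 2\Delta/\beta$ gives $\rho = \eta^2/8 = \Delta^2/(2\beta^2)$, which holds for every $\alpha>1$ and every neighboring pair, establishing $\rho$-zCDP. I would finally remark that the factor of two in $\eta$ (from differencing two output losses) is exactly what turns $\eta^2/8$ into $\Delta^2/(2\beta^2)$, giving the same form as the Gaussian-mechanism bound in Theorem \ref{thm:gauss}.
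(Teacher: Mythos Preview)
The paper does not supply its own proof of this theorem: it is stated as a cited result from \citet{exp-zcdp} and used as a black box in the proof of Theorem~\ref{thm:fed-dp}. So there is no in-paper proof to compare against.

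That said, your argument is correct and is essentially the bounded-range route taken in the cited reference. The two key steps you identify---(a) the Gumbel-max output law equals the exponential mechanism $p_D(o)\propto e^{S_o(D)/\beta}$, so the privacy loss $\mathcal{L}(o)$ has range at most $\eta=2\Delta/\beta$ because the normalizer cancels; and (b) converting $\eta$-bounded range to $\rho$-zCDP with $\rho=\eta^2/8$ via Hoeffding's lemma applied at $s=-1$ (to bound $\mathrm{KL}=\mathbb{E}_{p_D}[\mathcal{L}]\le\eta^2/8$) and at $s=\alpha-1$ (to bound the MGF)---are exactly the ingredients in the literature proof. One minor remark: your appeal to the $\ell_1$-sensitivity to bound $\max_j|S_j(D)-S_j(D')|$ is valid but slack; in the paper's application the relevant sensitivity is actually the per-output (scalar) change $|S_{i,j}(D)-S_{i,j}(D')|$, which is what the paper bounds in \eqref{eq:sensitivity-score}, and this already gives the needed $\max$ bound directly.
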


\subsubsection{Proof of Theorem~\ref{thm:fed-dp}}
\begin{proof}
The smooth part of the local objective is
\[
\mathcal{L}(B;X)
= L(B;X) 
+ \operatorname{tr}\!\big[\beta^\top (B-W)\big]
+ \frac{\rho_2}{2}\|B-W\|_F^2,
\]
where only $L(B;X)$ depends on the data:
\[
L(B;X) = \frac{1}{2n}\|X - X B\|_F^2
= \frac{1}{n}\sum_{i=1}^n \ell(B;x_i),
\qquad
\ell(B;x_i) = \frac{1}{2}\|x_i - B^\top x_i\|_2^2.
\]
The $(i,j)$-th partial derivative of $\ell$ is $\nabla_{i,j}\ell(B;x_i)$.

\paragraph{Gradient sensitivity.}
For neighboring datasets $X,X'$ differing in a single sample at index $k$, we have
\[
\Delta
= \sup_{X,X'} 
\|\nabla_{i,j}\mathcal{L}(B;X) - \nabla_{i,j}\mathcal{L}(B;X')\|_2
= \frac{1}{n} \big|\nabla_{i,j}\ell(B;x_k) - \nabla_{i,j}\ell(B;x'_k)\big|
\le \frac{2L_{i,j}}{n},
\]
where $L_{i,j}$ is the coordinate-wise Lipschitz constant of $\ell$.

\paragraph{Sensitivity of the coordinate selection score.}
The non-private score is
\[
S_{i,j} = \sqrt{M_{i,j}}
\Big|
\operatorname{prox}_{\frac{\lambda}{M_{i,j}}|\cdot|}
\Big(B_{i,j}^{k} - \frac{1}{M_{i,j}}\nabla_{i,j}\mathcal{L}(B^{k};X)\Big)
- B_{i,j}^{k}
\Big|.
\]
By the non-expansiveness of the proximal operator, the sensitivity satisfies:
\begin{align}
\Delta(S_{i,j})
&= \Big| \sqrt{M_{i,j}} \Big| 
\operatorname{prox}_{\frac{\lambda}{M_{i,j}}|\cdot|}
\Big( B_{i,j} - \frac{1}{M_{i,j}} \nabla_{i,j} \mathcal{L}(B;X) \Big)
- B_{i,j} \Big| \nonumber\\
&\quad - \sqrt{M_{i,j}} \Big|
\operatorname{prox}_{\frac{\lambda}{M_{i,j}}|\cdot|}
\Big( B_{i,j} - \frac{1}{M_{i,j}} \nabla_{i,j} \mathcal{L}(B;X') \Big)
- B_{i,j} \Big| \Big| \nonumber\\
&\le \sqrt{M_{i,j}} 
\Big| 
\Big(B_{i,j} - \frac{1}{M_{i,j}}\nabla_{i,j} \mathcal{L}(B;X) \Big)
- \Big(B_{i,j} - \frac{1}{M_{i,j}}\nabla_{i,j} \mathcal{L}(B;X') \Big)
\Big| \nonumber\\
&= \frac{1}{\sqrt{M_{i,j}}} 
\big| \nabla_{i,j} \mathcal{L}(B;X) - \nabla_{i,j} \mathcal{L}(B;X') \big|
\le \frac{2L_{i,j}}{n\sqrt{M_{i,j}}} \nonumber\\
&= \frac{\Delta}{\sqrt{M_{i,j}}}.
\label{eq:sensitivity-score}
\end{align}

\paragraph{Base mechanisms.}

We now analyze the privacy guarantees of the two base mechanisms we use:

\begin{enumerate}
    \item \textbf{Coordinate selection via Gumbel-max (Exponential mechanism).}  
    The non-private score $S_{i,j}$ from \eqref{eq:sensitivity-score} has sensitivity $\Delta(S_{i,j})$. From Theorem \ref{thm:zCDP_gumbel}, adding Gumbel noise $\mathrm{Gumbel}(0,\beta_{i,j})$ to this score ensures $\rho_\text{EM}$-zCDP with
\[
    \rho_\text{EM} = \frac{\Delta(S_{i,j})^2}{2\beta_{i,j}^2} = \frac{1}{M_{i,j}}\frac{\Delta^2}{2\beta_{i,j}^2}
\]

    \item \textbf{Gradient update via Gaussian mechanism.}  
    Each gradient coordinate $\nabla_{i,j}\mathcal{L}(B^k; X)$ has $\ell_2$-sensitivity $\Delta$. Adding Gaussian noise $\mathcal{N}(0, \sigma_{i,j}^2)$ to this coordinate ensures $\rho_\text{Gauss}$-zCDP (Theorem \ref{thm:gauss}) with
    \[
        \rho_\text{Gauss} = \frac{\Delta^2}{2 \sigma_{i,j}^2}.
    \]

    \item \textbf{Matching privacy costs.}  
    To balance the privacy budget between the two mechanisms, we set
    \[
        \beta_{i,j} = \frac{\sigma_{i,j}}{\sqrt{M_{i,j}}}.
    \]
    With this choice, both mechanisms contribute equally to the total zCDP cost:
    \[
        \rho = \frac{\Delta^2}{2\sigma_{i,j}^2}.
    \]
\end{enumerate}

\paragraph{Composition and DP conversion.}
Since there are $2KT$ queries in total, by the composition theorem (Theorem~\ref{thm:compo-zCDP}),
\[
\rho_\text{total} = 2KT \rho = \frac{KT \Delta^2}{\sigma_{i,j}^2}.
\]
Applying the zCDP to $(\varepsilon,\delta)$-DP conversion (Theorem~\ref{thm:zCDP-to-DP}) yields
\[
\varepsilon = \rho_\text{total} + 2 \sqrt{\rho_\text{total} \log(1/\delta)}.
\]
Therefore, to satisfy $(\varepsilon,\delta)$-DP, we need to set the noise scales to
\[
\beta_{i,j} = \frac{\sigma_{i,j}}{\sqrt{M_{i,j}}} 
= \frac{\Delta \sqrt{KT}(\sqrt{\log(\frac{1}{\delta}) + \varepsilon} +\sqrt{\log(\frac{1}{\delta})})}{\varepsilon}.\qedhere
\]
\end{proof}

\subsection{Smoothness Constants}
\label{app:smoothness_theo}

In our method, we assume the coordinate-wise smoothness constants $M_{i,j}$ known or upper bounded. When it is not the case, these constants can be estimated privately. 
\subsubsection{Estimation of the Constants}
\textit{Coordinate-wise smoothness.} A differentiable function $f : \mathbb{R}^{d \times d} \rightarrow \mathbb{R}$ is called \emph{coordinate-wise smooth} if there exists $M \in \mathbb{R}^{d \times d}$ such that for all $X \in \mathbb{R}^{d \times d}$ and for all $1 \leq i,j \leq d$, 
\[
    \lvert \nabla_{i,j} f(X+uE_{i,j}) - \nabla_{i,j}f(X) \rvert \leq M_{i,j} \lvert u \rvert, \hspace{0.5cm} \forall u \in \mathbb{R},
\]
where $E_{i,j}$ is the matrix with 1 in entry $(i,j)$ and $0$ elsewhere.

Consider the local objective: 
\[
\mathcal{L}(B;X)
= L(B;X) 
+ \operatorname{tr}\!\big[\beta^\top (B-W)\big]
+ \frac{\rho_2}{2}\|B-W\|_F^2,
\]
where $X \in \mathbb{R}^{n \times d}$ is the local data matrix.
The Hessian with respect to $B$ is 
\[
\nabla^2\mathcal{L}(B;X) = \frac{1}{n} X^\top X \otimes I_d + \rho_2 I_{d^2}.
\]
Hence, the coordinate-wise smoothness constant for coordinate $(i,j)$ is:
\begin{align}
    M_{i,j}(X) = M_j(X) &= \frac{1}{n} \|X_{:,j}\|_2^2 + \rho_2 \qquad \forall i \in {1, \dots, d}\nonumber \\
    & = \frac{1}{n} \sum_{k=1}^n X_{k,j}^2 + \rho_2\label{eq:our_smooth_coord}
\end{align}
As can be seen from the formula above, $M_{i,j}(X)$ does not depend on $i$, hence the notation $M_j(X)$.

\subsubsection{Differentially Private Estimation}

We now explain how to privately estimate the coordinate-wise smoothness constants. Following \citet{mangold}, we assume we know an upper bound $b_j$ on the squared value of the $j$-th feature, i.e., $X_{j}^2 \leq b_j$.
    
\begin{theorem}[Privacy of smoothness constants]
\label{thm:smoothness_cst}
    Let $\varepsilon_M, \delta_M > 0$ and $\Delta_{M,j}=b_j/n_p$ where $b_j$ is such that $X_{j}^2 \leq b_j$ and $n_p$ is the sample size of participant $p$.
    If the Gaussian noise parameter is chosen as
\[
\sigma_j =  \frac{\Delta_{M,j} \sqrt{d} (\sqrt{\log(1/\delta_M) + \varepsilon_M} + \sqrt{\log(1/\delta_M)})} {\sqrt{2}\varepsilon_M},\quad\forall j \in \{1, \dots, d\},
\]
then participant $p$ can locally estimate all $M_{i,j}$'s with $(\varepsilon_M, \delta_M)$-DP in a preprocessing step using $d$ calls to the Gaussian mechanism with noise scales $\sigma_1,\dots,\sigma_d$.
\end{theorem}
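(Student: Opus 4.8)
The plan is to reduce Theorem~\ref{thm:smoothness_cst} to $d$ applications of the Gaussian mechanism plus zCDP composition, mirroring the structure of the proof of Theorem~\ref{thm:fed-dp}. First I would identify the query being privatized: by \eqref{eq:our_smooth_coord}, estimating all the $M_{i,j}$ amounts to estimating, for each column $j$, the quantity $M_j(X) = \tfrac1n\sum_{k=1}^n X_{k,j}^2 + \rho_2$, since $M_{i,j}$ does not depend on $i$. The additive constant $\rho_2$ is data-independent, so it contributes nothing to sensitivity; thus the effective query is the vector $\big(\tfrac1n\sum_k X_{k,j}^2\big)_{j=1}^d$, which we may privatize coordinate-by-coordinate.

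Next I would bound the sensitivity of each coordinate. For neighboring local datasets $X, X'$ differing in a single sample (say row $k$), the $j$-th coordinate changes by $\tfrac1n|X_{k,j}^2 - X'^{\,2}_{k,j}| \le \tfrac{b_j}{n}$ using the assumed bound $X_j^2 \le b_j$ (both terms lie in $[0,b_j]$, so their difference is at most $b_j$). This gives $\ell_2$-sensitivity $\Delta_{M,j} = b_j/n_p$ for the scalar query in column $j$, matching the theorem statement. Then by Theorem~\ref{thm:gauss}, adding $\mathcal{N}(0,\sigma_j^2)$ noise to coordinate $j$ satisfies $\rho_j$-zCDP with $\rho_j = \Delta_{M,j}^2/(2\sigma_j^2)$. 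To balance the budget across the $d$ columns I would take $\sigma_j$ so that all $\rho_j$ equal a common value $\rho$; with $\Delta_{M,j} = b_j/n_p$ this means $\sigma_j \propto b_j$, consistent with $\sigma_j$ scaling linearly in $\Delta_{M,j}$ as in the stated formula. By the composition theorem (Theorem~\ref{thm:compo-zCDP}), the $d$ Gaussian releases together satisfy $\rho_{\text{tot}}$-zCDP with $\rho_{\text{tot}} = d\rho$.

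Finally I would convert back to approximate DP: by Theorem~\ref{thm:zCDP-to-DP}, $\rho_{\text{tot}}$-zCDP implies $(\varepsilon_M,\delta_M)$-DP with $\varepsilon_M = \rho_{\text{tot}} + 2\sqrt{\rho_{\text{tot}}\log(1/\delta_M)}$. Solving this quadratic in $\sqrt{\rho_{\text{tot}}}$ gives $\sqrt{\rho_{\text{tot}}} = \sqrt{\log(1/\delta_M)+\varepsilon_M} - \sqrt{\log(1/\delta_M)}$, hence $\rho = \rho_{\text{tot}}/d$, and substituting $\rho = \Delta_{M,j}^2/(2\sigma_j^2)$ and solving for $\sigma_j$ yields exactly the claimed expression $\sigma_j = \Delta_{M,j}\sqrt{d}\,(\sqrt{\log(1/\delta_M)+\varepsilon_M}+\sqrt{\log(1/\delta_M)})/(\sqrt{2}\,\varepsilon_M)$, where the numerator is rationalized using $(\sqrt{a+\varepsilon_M}-\sqrt{a})(\sqrt{a+\varepsilon_M}+\sqrt{a}) = \varepsilon_M$. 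Post-processing (Theorem carried over from the DP preliminaries) then ensures that reconstructing each $M_{i,j}$ from the noisy column estimate preserves the guarantee. The only mildly delicate point is the sensitivity argument: one must be careful that the bound $X_j^2 \le b_j$ is applied to both the removed and the inserted sample so that the per-coordinate change is at most $b_j/n_p$ rather than $2b_j/n_p$; everything else is routine bookkeeping already carried out in the proof of Theorem~\ref{thm:fed-dp}.
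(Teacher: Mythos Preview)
Your proposal is correct and follows essentially the same route as the paper's proof: bound the per-coordinate sensitivity by $\Delta_{M,j}=b_j/n_p$, apply the Gaussian mechanism to each of the $d$ column queries to get $\rho^{(j)}=\Delta_{M,j}^2/(2\sigma_j^2)$-zCDP, compose additively over $j$, and invert the zCDP-to-$(\varepsilon,\delta)$ conversion to obtain the stated noise scale. Your derivation is in fact slightly more explicit than the paper's (you spell out the quadratic inversion and the rationalization step), and your remark about $b_j/n_p$ versus $2b_j/n_p$ correctly identifies the one place where the assumption $X_j^2\le b_j$ is actually used.
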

\begin{proof}
As shown in \eqref{eq:our_smooth_coord}, we only need to estimate $M_1(X),\dots,M_d(X)$ to obtain all coordinate-wise smoothness constants. 
 Let $X$ and $X'$ two neighboring datasets that differ in exactly one datapoint. The sensitivity of $M_{j}$ is
\[
\Delta_{M,j} = \sup_{X,X'} \|  M_{j}(X) -  M_{j}(X') \|_2 = \frac{1}{n_p} \lvert X_{k,j}^2 - X_{k,j}^{'2}\rvert \leq \frac{b_j}{n_p}.
\]
To privatize $M_{j}$, we clip and add Gaussian noise: 
\[
\tilde{M}_{j}(X) = \frac{1}{n_p} \sum_{k=1}^{n_p} \min(X_{k,j}^2, b_j) + \rho_2 + \mathcal{N}(0,\sigma_j^2).
\]

By Theorem \ref{thm:gauss}, releasing $\tilde{M}_{j}(X)$ satisfies $\rho^{(j)}$-zCDP with
    \[
        \rho^{(j)} = \frac{\Delta_{M,j}^2}{2 \sigma_{j}^2}.
    \] 
Since there are $d$ constants to estimate privately, by the composition theorem (Theorem \ref{thm:compo-zCDP}),
    \[
    \rho_{\text{total}} = d\frac{\Delta_{M,j}^2}{2 \sigma_{j}^2}.
    \]

Applying the zCDP to $(\varepsilon_M,\delta_M)$-DP conversion (Theorem~\ref{thm:zCDP-to-DP}) yields
\[
\varepsilon_M = \rho_\text{total} + 2 \sqrt{\rho_\text{total} \log(1/\delta)}.
\]

Therefore, to satisfy $(\varepsilon_M,\delta_M)$-DP, we need to set the noise scale to
\[
\sigma_j =  \frac{\Delta_M \sqrt{d} (\sqrt{\log(1/\delta_M) + \varepsilon_M} + \sqrt{\log(1/\delta_M)})} {\sqrt{2}\varepsilon_M}.\qedhere
\]
\end{proof}

\textbf{Remark.} The above theorem shows how each participant $p$ can estimate the coordinate-wise smoothness constants based on its local data. These participant-level constants can then be shared and aggregated to obtain more accurate global estimates.

\subsection{Theoretical Comparison of DP-PGCD Privacy Bounds}
\label{app:DPPGCD-comparison}

We provide a detailed mathematical comparison of the privacy guarantees between our improved DP-PGCD algorithm (Algorithm~\ref{algo:DP-PGCD}) and the original approach introduced by \citet{mangold}, which relied on Laplace noise. The following analysis highlights why our method requires less noise to achieve the same privacy budget.

The privacy loss for our approach is given by:
\begin{equation*}
    \varepsilon_{\text{new}} = \frac{4 L_{ij} \sqrt{KT \log(1/\delta)}}{n \sigma_{ij}} + \frac{4 L_{ij}^2 KT}{n^2\sigma_{ij}^2},
\end{equation*}
while the original approach of \citet{mangold}, it is:
\begin{equation*}
    \varepsilon_{\text{old}} = \frac{4 L_{ij} \sqrt{KT \log(1/\delta)}}{n \lambda_{ij}} + \frac{4 L_{ij} KT}{n\lambda_{ij}} \Big(e^{\frac{2L_{ij}}{n\lambda_{ij}}} - 1\Big),
\end{equation*}
where $\lambda_{ij}$ is the scale parameter of a Laplace distribution.

To compare these expressions, we match the noise variance: for a Gaussian distribution $\mathcal{N}(0, \sigma^2)$, the variance is $\sigma^2$, while for a Laplace distribution $\text{Lap}(0,\lambda)$, it is $2\lambda^2$. Setting $\sigma^2 = 2\lambda^2$ (i.e., $\lambda = \frac{\sigma}{\sqrt{2}}$) yields:
\begin{equation*}
    \varepsilon_{\text{new}} = \frac{4 L_{ij} \sqrt{KT \log(1/\delta)}}{n \sigma_{ij}} + \frac{4 L_{ij}^2 KT}{n^2\sigma_{ij}^2},
\end{equation*}
\begin{equation*}
    \varepsilon_{\text{old}} = \frac{4\sqrt{2} L_{ij} \sqrt{KT \log(1/\delta)}}{n \sigma_{ij}} + \frac{4\sqrt{2} L_{ij} KT}{n\sigma_{ij}} \Big(e^{\frac{2\sqrt{2}L_{ij}}{n\sigma_{ij}}} - 1\Big).
\end{equation*}
We analyze the ratio $\frac{\varepsilon_{\text{old}}}{\varepsilon_{\text{new}}}$:
\begin{align*}
    \frac{\varepsilon_{\text{old}}}{\varepsilon_{\text{new}}} & =  \sqrt{2} \frac{1 + \sqrt{\frac{KT}{\log(1/\delta)}} \Big(e^{2\sqrt{2}\frac{L_{ij}}{n\sigma_{ij}}} - 1\Big)}{1 + \sqrt{\frac{KT}{\log(1/\delta)}} \frac{L_{ij}}{n\sigma_{ij}}}.
\end{align*}
Using the inequality $e^x \geq 1 + x$ for all $x \in \mathbb{R}$, we have: 
\begin{align*}
   & e^{2\sqrt{2}\frac{L_{ij}}{n\sigma_{ij}}} - 1 \geq 2\sqrt{2}\frac{L_{ij}}{n\sigma_{ij}} > \frac{L_{ij}}{n\sigma_{ij}}.
\end{align*}
By multiplying both sides by $\sqrt{\frac{KT}{\log(1/\delta)}}$ and adding 1, we have: 
\begin{align*}
     & 1 + \sqrt{\frac{KT}{\log(1/\delta)}} (e^{2\sqrt{2}\frac{L_{ij}}{n\sigma_{ij}}} - 1) > 1 + \sqrt{\frac{KT}{\log(1/\delta)}}\frac{L_{ij}}{n\sigma_{ij}}. 
\end{align*}  
Equivalently,
\begin{align*}
     & \frac{1 + \sqrt{\frac{KT}{\log(1/\delta)}} (e^{2\sqrt{2}\frac{L_{ij}}{n\sigma_{ij}}} - 1)}{1 + \sqrt{\frac{KT}{\log(1/\delta)}} \frac{L_{ij}}{n\sigma_{ij}}} > 1.
\end{align*}  
Finally, by multiplying both sides by $\sqrt{2}$, we obtain:
\begin{align*}  
    &  \frac{\varepsilon_{\text{old}}}{\varepsilon_{\text{new}}} \geq \sqrt{2}.
\end{align*}  
Thus, our DP-PGCD (Algorithm~\ref{algo:DP-PGCD}) provides at least a $\sqrt{2}$ improvement in privacy loss compared to \citet{mangold}, for the same noise variance.

\section{Experimental Setup and Implementation}
\subsection{Datasets}
\subsubsection{Synthetic Data: Homogeneous Setting}
\label{app:data_homo}

To simulate homogeneous data, we reproduce and generalization the data generation procedure used in \citep{notears,fed-notears}. We generate a dataset for $P$ participants, each having $n_p$ samples, across $d$ variables. This setup assumes a shared underlying structure and identical edge weights across all participants. 

\begin{itemize}
    \item \textbf{DAG structure generation:} A $d \times d$ binary adjacency matrix is initially created. This is achieved by generating an Erdös-Rényi random graph with $d$ nodes and an expected number of $d$ edges. The generation process ensures the resulting graph is acyclic.
    \item \textbf{Edge weight assignment:} Once the DAG structure (represented by the binary adjacency matrix) is defined, non-zero weights are assigned to its edges. These weights are uniformly sampled from two disjoint intervals: $[-2, -0.5] \cup [0.5, 2]$. This results in a weighted adjacency matrix.
    \item \textbf{Data generation:} Observations are then simulated from the linear Gaussian Structural Equation Model (SEM) defined in  Section~\ref{sec:pb-setting}. In this model, the value of each variable is determined by a linear combination of its parents' value (as defined by the weighted adjacency matrix), plus an independent Gaussian noise term. This additive noise term is sampled, for each variable, from a standard normal distribution $\mathcal{N}(0,1)$. This implies a uniform noise variance of 1 across all variables, a condition that ensures the identifiability of the DAG structure \citep{identif,loh}.
\end{itemize}

\subsubsection{Synthetic Data: Heterogeneous Setting}
\label{app:data_hetero}

For heterogeneous data generation, we consider $P=5$ participants, each having $n_p=5000$ samples, and we assume that all $P$ participants share the same underlying DAG structure but differ in their edge weights. Instead of being identical for all participants, we use a hierarchical design: each participant's weight for an edge present in the DAG is independently sampled from a Gaussian distribution centered at the corresponding global weight, with variance $\sigma^2 = 0.1$. Global weights are uniformly drawn from the disjoint intervals $[-2, -0.5] \cup [0.5, 2]$ as in the homogeneous setting. Observations for each participant are then generated from a linear Gaussian SEM using their individual edge weights.

\subsubsection{Real Data}
\label{app:real_data}
The Sachs dataset \citep{Sachs}, composed of $d=11$ variables representing protein signaling molecules and $n=7466$ samples, features a biologically validated ground-truth DAG with $18$ edges.
To simulate a federated setting, the dataset was partitioned across $P=3$ participants, resulting in $n_p = 2488$ samples per participant. 

\subsection{Evaluation Metrics}
\label{app:metrics}
\subsubsection{Communication cost}
The total communication cost is defined as the cumulative volume of data exchanged between the server and all participants over the entire training procedure, reported in megabytes (MB).
Each floating-point value is encoded using $8$ bytes.

When sparsity is used, the transmission of each non-zero coefficient requires sending its matrix index, encoded using $\lceil \log_2(d^2) / 8 \rceil$ bytes.
Hence, the total size of one transmitted coefficient-index pair is $b_{\text{entry}} = 8 + \lceil \log_2(d^2) / 8 \rceil$ bytes.

In \prevmethod, a dense $d \times d$ matrix is transmitted from each of the $P$ participants to the server and back at every iteration, resulting in a total communication cost accumulated over $T$ iterations of 
\[
\text{Cost} = 2 \times T \times P \times d^2 \times 8 \text{bytes}.
\]
Since all entries are transmitted, no index encoding is required in this case.

In contrast, \method communicates only non-zero coefficients and their indices are transmitted from each of the $P$ participants to the server and back at every iteration, resulting in a total communication cost accumulated over $T$ iterations of 
\[
\text{Cost} = \sum_{t=1}^T \Big(
\underbrace{\sum_{p=1}^P e_{\text{local}}^{(p,t)} \times b_{\text{entry}}}_{\text{participants}}
+ 
\underbrace{P \times e_{\text{global}}^{(t)} \times b_{\text{entry}}}_{\text{server}}
\Big),
\]
where $e_{\text{local}}^{(p,t)}$ denotes the number of coefficients sent by participant $p$ at iteration $t$, and $e_{\text{global}}^{(t)}$ denotes the number of global coefficients broadcast by the server. 

\subsubsection{Personalization}
In the heterogeneous setting, each participant $p$ has its own regression coefficients $B_{\text{true}}^{(p)}$ within a shared DAG structure.
After learning the consensus structure, \method returns a global weighted adjacency matrix $\widehat{B}_\text{global}$. Then, each participant locally re-estimates its coefficients $\widehat{B}^{(p)}$ by ordinary least squares regression on its own data, keeping the DAG fixed.

The quality of estimated parameters is measured by the normalized mean squared error (MSE): 
\[
\mathrm{MSE}_p(\widehat{B}) = \frac{\| \widehat{B} - B_{\text{true}}^{(p)} \|_F^2}{\|B_{\text{true}}^{(p)}\|_F^2}.
\]
In Figure \ref{fig:MSE_fed_solo}, $\mathrm{MSE}_p(\widehat{B}^{(p)})$ is \method with personalization and $\mathrm{MSE}_p(\widehat{B}_\text{global})$ is \method without personalization.

\subsection{Hyperparameters}
\label{app:hyperparams}
The selection of appropriate hyperparameters is critical for the performance and fair comparison of algorithms. This appendix details the hyperparameter tuning methodology employed for \method, \dpmethod, \prevmethod and \dpprevmethod across all experimental settings: homogeneous and heterogeneous synthetic data, real data, across both non-private and private settings. Our general goal was to ensure that each method operated as its optimal performance for each specific scenario. 

\subsubsection{Synthetic Data: General Tuning Protocol}

For all settings, the metric used to tune the hyperparameters is the Structural Hamming Distance (SHD). 

\textbf{Tuning on held-out problem instances.} For experiments with dimension $d=20$, hyperparameters for each algorithm were tuned on an independent dataset generated with seed$=1$. For higher-dimensional experiments ($d \in \{50,100,200\}$), hyperparameters were tuned based on the average performance across two independent datasets, generated with seed$=100$ and seed$=400$. This approach helps to ensure robustness of the chosen hyperparameters against specific dataset realizations in high dimensions.

\textbf{Final evaluation.} Once the optimal hyperparameters were determined for a given scenario using the procedure above, the final results (mean and standard deviation) reported in Section \ref{sec:expe} were obtained by running each algorithm on $10$ new datasets generated with $10$ independent seeds: $\{2, \dots, 11\}$. 

\subsubsection{Synthetic Data: Non-Private Setting}

In the non-private setting, we tuned the ADMM penalty parameters $(\rho_1, \rho_2)$, the step size for gradient descent in \method ($\gamma$), and the regularization parameter ($\lambda$). As done in prior work \citep{notears,fed-notears}, we also use a threshold set at $0.3$ for edge pruning in a post-processing step.

\paragraph{Homogeneous data.}
For homogeneous synthetic data, the grid search ranges for each hyperparameter are detailed in Table \ref{tab:homogeneous}.
\begin{table}
\centering
\caption{Grid search for hyperparameters tuning in the non-private homogeneous setting.}
\label{tab:homogeneous}
\begin{tabular}{ll}
\toprule
\textbf{Hyperparameter} & \textbf{Grid search} \\
\midrule
$\rho_1$ &  $\{10,50,100,1000,10000\}$\\
$\rho_2$ & $\{1,10,100,1000\}$ \\
$\gamma$ & $\{0.1, 0.5, 1\}$ \\
$\lambda$ & $\{0.001, 0.01, 0.1, 0.5, 1\}$ \\
\bottomrule
\end{tabular}
\end{table}

The best configurations of hyperparameters, selected based on the lowest SHD on the tuning dataset (seed$=1$) are:
\begin{itemize}
    \item For \method: $\rho_1=1000$, $\rho_2=1$ $\lambda=0.1$ and $\gamma=0.5$ 
    \item For \prevmethod: $\rho_1=1000$, $\rho_2=1$ $\lambda=0.01$
\end{itemize}

\paragraph{Heterogeneous data.}
For heterogeneous synthetic data, the grid search ranges were slightly adjusted. The grid search ranges are detailed in Table \ref{tab:heterogeneous}.
\begin{table}
\centering
\caption{Grid search for hyperparameters tuning in the non-private heterogeneous setting.}
\label{tab:heterogeneous}
\begin{tabular}{ll}
\toprule
\textbf{Hyperparameter} & \textbf{Grid search} \\
\midrule
$\rho_1$ &  $\{100,1000,10000\}$\\
$\rho_2$ & $\{1,10,100\}$ \\
$\gamma$ & $\{0.5, 1\}$ \\
$\lambda$ & $\{0.01, 0.1, 1\}$ \\

\bottomrule
\end{tabular}
\end{table}

The best configurations of hyperparameters, selected based on the lowest SHD on the tuning dataset (seed$=1$) are:
\begin{itemize}
    \item For \method: $\rho_1=1000$, $\rho_2=1$ $\lambda=0.1$ and $\gamma=0.5$ 
    \item For \prevmethod: $\rho_1=100$, $\rho_2=1$ $\lambda=0.1$
\end{itemize}

\subsubsection{Synthetic Data: Private Setting}
In the private setting, we focused on tuning parameters specific to differentially private mechanisms. 

\paragraph{Privacy-utility study (Figure \ref{fig:dp_eps}).}
For \dpmethod, we fixed the non-private hyperparameters ($\rho_1, \rho_2, \gamma, \lambda$) to their best-performing values identified in the non-private setting. This allowed us to isolate the impact of privacy-specific parameters. 
We then tuned the following privacy-specific hyperparameters: $C$ is the clipping threshold for gradients in \dpmethod, $T$ the number of ADMM iterations and $K$ the number of local PGCD iterations performed by each participant per ADMM iteration. 

The grid search ranges are given in Table \ref{tab:pu}.
\begin{table}[!ht]
\centering
\caption{Grid search ranges for privacy-specific hyperparameters tuning in the private homogeneous setting for the privacy-utility study (Figure \ref{fig:dp_eps}).}
\label{tab:pu}
\begin{tabular}{ll}
\toprule
\textbf{Hyperparameter} & \textbf{Grid search} \\
\midrule
$C$ & $\{3,5,7,10,20,30\}$ \\
$T$ & $\{10, 20, 50, 100\}$ \\
$K$ & $\{10, 20, 30, 40, 50, 100\}$ \\
\bottomrule
\end{tabular}
\end{table}

The optimal combination of these hyperparameters for each privacy budget $\varepsilon \in \{0.5,1,2, 5,10,25,50\}$ are presented Table \ref{tab:privacy_hyperparams}.
\begin{table}[!ht]
\centering
\caption{Optimal hyperparameters for \dpmethod\ in the privacy-utility study (Figure \ref{fig:dp_eps}), for each privacy budget \(\varepsilon\).}
\label{tab:privacy_hyperparams}
\begin{tabular}{cccc}
\toprule
\(\varepsilon\) & \(C\) & \(T\) & \(K\) \\
\midrule
0.5 & 10 & 10 & 10 \\
1   & 10 & 10 & 10 \\
2   & 5  & 100 & 10 \\
5   & 5  & 100 & 20 \\
25  & 7  & 100 & 30 \\
50  & 5  & 100 & 50 \\
\bottomrule
\end{tabular}
\end{table}

\paragraph{Dimensionality robustness study (Figure~\ref{fig:DP_dim}).}
This study evaluated the performance of \dpmethod compared to the baseline \dpprevmethod as the data dimension $d$ increased, under a fixed privacy budget of $\varepsilon=10$. For this analysis, a broader set of hyperparameters, including 
$\lambda$ and $\gamma$, were re-tuned for each dimension. For \dpprevmethod, $b$ is the sensitivity bound used for the Gaussian mechanism applied to privatize the covariance matrices, as described by \cite{DBLP:conf/uai/Wang18}.
The grid search ranges for this study are given in Table \ref{tab:dim}.
\begin{table}[!ht]
\centering
\caption{Grid search ranges for hyperparameters tuning in the private dimensionality robustness study (Figure~\ref{fig:DP_dim}).}
\label{tab:dim}
\begin{tabular}{ll}
\toprule
\textbf{Hyperparameter} & \textbf{Grid search} \\
\midrule
$C$ & $\{5,10,20,30\}$ \\
$b$ & $\{5,10,20,30\}$ \\
$T$ & $\{50, 100\}$ \\
$K$ & $\{20, 50, 100\}$ \\
$\lambda$ & $\{0.01, 0.1, 1\}$ \\
$\gamma$ & $\{0.5,1\}$\\
\bottomrule
\end{tabular}
\end{table}

The optimal hyperparameters for each method and dimension in the dimensionality robustness study are presented in Table \ref{tab:dim_hyperparams}.
\begin{table}[!ht]
\centering
\caption{Optimal hyperparameters for each method and dimension, with privacy budget \(\varepsilon = 10\), in the private dimensionality robustness study (Figure~\ref{fig:DP_dim}).}
\label{tab:dim_hyperparams}
\begin{tabular}{lcccccccc}
\toprule
Dimension & \multicolumn{5}{c}{\dpmethod} & \multicolumn{3}{c}{\dpprevmethod} \\
\cmidrule(lr){2-6} \cmidrule(lr){7-9}
& \(C\) & \(T\)  & \(K\) & \(\lambda\) & \(\gamma\) & \(b\) & \(T\) & \(\lambda\) \\
\midrule
20 & 10 & 100 & 30 & 0.1 & 0.5 & 7 & 300 & 0.01 \\
50 & 5 & 50 & 50 & 0.1 & 1 & 10 & 300 & 0.1 \\
100 & 30 & 100 & 50 & 0.1 & 1 & 20 & 300 & 0.01 \\
200 & 30 & 100 & 100 & 0.1 & 1 & 10 & 300 & 0.01 \\
\bottomrule
\end{tabular}
\end{table}

\subsubsection{Real Data: Non-Private Setting}

In contrast to synthetic data, where we can select hyperparameters on held-out problem instances, it is not clear how to implement a similar validation procedure on the Sachs dataset. Consistent with prior work \citep{notears,fed-notears}, we select the best hyperparameters for each method based on the final structure learning metrics.

We recall that the hyperparameters in the non-private settings are the ADMM penalty parameters $(\rho_1, \rho_2)$,  the $\ell_1$ regularization parameter ($\lambda$), the step size for gradient descent in \method ($\gamma$). For this dataset, we also tuned the threshold used for edge pruning in a post-processing step.

For the non-private evaluation on the Sachs dataset, the goal of hyperparameter tuning was to identify configurations that yielded the best structural accuracy. Given the known ground-truth DAG has $18$ edges, we specifically looked for configurations that estimated approximately $18$ edges while maximizing the number of correctly identified undirected edges.

The tuning process involved a grid search over the hyperparameters for the two methods. 
The grid search for \method and \prevmethod included:
\begin{table}[!ht]
\centering
\begin{tabular}{ll}
\toprule
\textbf{Hyperparameter} & \textbf{Grid searchs} \\
\midrule
$\rho_1$ &  $\{100,1000,10000, 100000\}$\\
$\rho_2$ & $\{1,5,10,100\}$ \\
$\gamma$ & $\{0.5, 1\}$ \\
$\lambda$ & $\{0.01, 0.1, 1\}$ \\
threshold & $\{0, 0.1, 0.3\}$\\
\bottomrule
\end{tabular}
\end{table}


Note that, wile $\lambda, \rho_1, \rho_2$ and threshold are shared across both methods, $\gamma$ (step size for local updates) is a hyperparameter unique to \method.

\paragraph{Tuning metrics.} During tuning, we looked at several metrics, including the number of estimated edges, the number of correctly estimated undirected edges, the number of estimated v-structures, the number of correctly estimated v-structures and SHD.

Consistent with prior work, we observed that recovering ground-truth v-structures was particularly challenging for both methods on this dataset ($0$ correct v-structures found). Therefore, we prioritized configurations that resulted in an estimated number of edges close to the true $18$ edges of the Sachs DAG, while maximizing correctly identified undirected edges.

The best configurations of hyperparameters are:
\begin{itemize}
    \item For \method: $\rho_1=10000$, $\rho_2=5$ $\lambda=1, \gamma=0.1$ and threshold$=0.1$.
    \item For \prevmethod: $\rho_1=100000$, $\rho_2=10$ $\lambda=0.1$ and threshold$=0.1$.
\end{itemize}

\subsubsection{Real Data: Private Setting}

We recall that the privacy-specific hyperparameters are $C$ the clipping threshold for gradients in \dpmethod, $b$ the sensitivity bound used for the Gaussian mechanism applied to privatize the covariance matrices in \dpprevmethod, $T$ the number of ADMM iterations and $K$ the number of local PGCD iterations performed by each participant per ADMM iteration in \dpmethod. 

For the differentially private evaluation on the Sachs dataset, a fixed privacy budget of $\varepsilon=5$ (with $\delta = 1/n_p^2$) was used for both \dpmethod and \dpprevmethod. To reduce the tuning complexity and build upon the non-private baselines, the non-private hyperparameters ($\rho_1, \rho_2, \lambda, \gamma$) were fixed to their optimal values found in the non-private setting. The tuning then focused on privacy-specific parameters. 
\paragraph{Tuned privacy-specific hyperparameters.} The grids used for privacy-specific parameters for each algorithm are given in Tables~\ref{tab:grid-private-ours}~and~\ref{tab:grid-private-notears}. 

\begin{table}[!ht]
\centering
\caption{Grid search ranges for hyperparameters tuning in the private setting for \dpmethod}
\label{tab:grid-private-ours}
\begin{tabular}{ll}
\toprule
\textbf{Hyperparameter} & \textbf{Grid searchs} \\
\midrule
$C$ & $\{1,10,100,10000,15000,20000\}$ \\
$T$ & $\{30,50\}$ \\
$K$ & $\{10, 20, 30, 50\}$ \\
threshold & $\{0, 0.1, 0.3\}$ \\
\bottomrule
\end{tabular}
\end{table}
\begin{table}[!ht]
\centering
\caption{Grid search ranges for hyperparameters tuning in the private setting for \dpprevmethod}
\label{tab:grid-private-notears}
\begin{tabular}{ll}
\toprule
\textbf{Hyperparameter} & \textbf{Grid searchs} \\
\midrule
$b$ & $\{1,2,3,4,5,100,10000\}$ \\
$T$ & $\{100,300\}$ \\
threshold & $\{0, 0.1, 0.3\}$ \\

\bottomrule
\end{tabular}
\end{table}

The goal was to find the combination of privacy-specific hyperparameters that maintained the best possible utility (low SHD, high correct edges) while operating under the fixed privacy budget $\varepsilon=5$. We aimed for configurations whose performance was close to the non-private baseline.

The selected optimal private configurations are: 
\begin{itemize}
    \item For \dpmethod: $C=15000$, $T=30$ $K=30$, $\lambda=1$, $\gamma=0.1$ and threshold$=0$.
    \item For \dpprevmethod: $b=4, T=100, \lambda=0.1$ and threshold$=0.1$.
\end{itemize}

\subsection{Implementation and Computing Resources}
\label{app:add-info}

All experiments were conducted on CPUs, either on a personal computer or using the Grid5000 cluster (\url{https://www.grid5000.fr/w/Grid5000:Home}).

The implementation relies on several open-source components. Parts of the code were adapted from the open-source implementation of \citet{fed-notears} (available at \url{https://github.com/ignavierng/notears-admm/tree/master/notears\_admm}), which itself is based on the original NOTEARS implementation \citep{notears} (available at \url{https://github.com/xunzheng/notears/tree/master/notears}).
Additionally, the code for the privatization of the covariance matrix for \dpprevmethod was adapted from the repository at \url{https://github.com/BorjaBalle/analytic-gaussian-mechanism}. 

The Sachs dataset \citep{Sachs} was obtained through the Causal Discovery Toolbox (CDT) Python package \citep{Kalainathan2019CDT}.

Regarding licenses, the reused code for NOTEARS-ADMM \citep{fed-notears} and NOTEARS \citep{notears} and the analytic-gaussian-mechanism are all released under the Apache License. The Causal Discovery Toolbox (CTD) is released under the MIT License.

\section{Additional Empirical Evaluations}

\subsection{Impact of Data Standardization on Identifiability}
\label{app:std}

\begin{figure*}[t]
    \centering  
    \begin{subfigure}[t]{1\textwidth}
    \centering
    \includegraphics[width=0.8\textwidth]{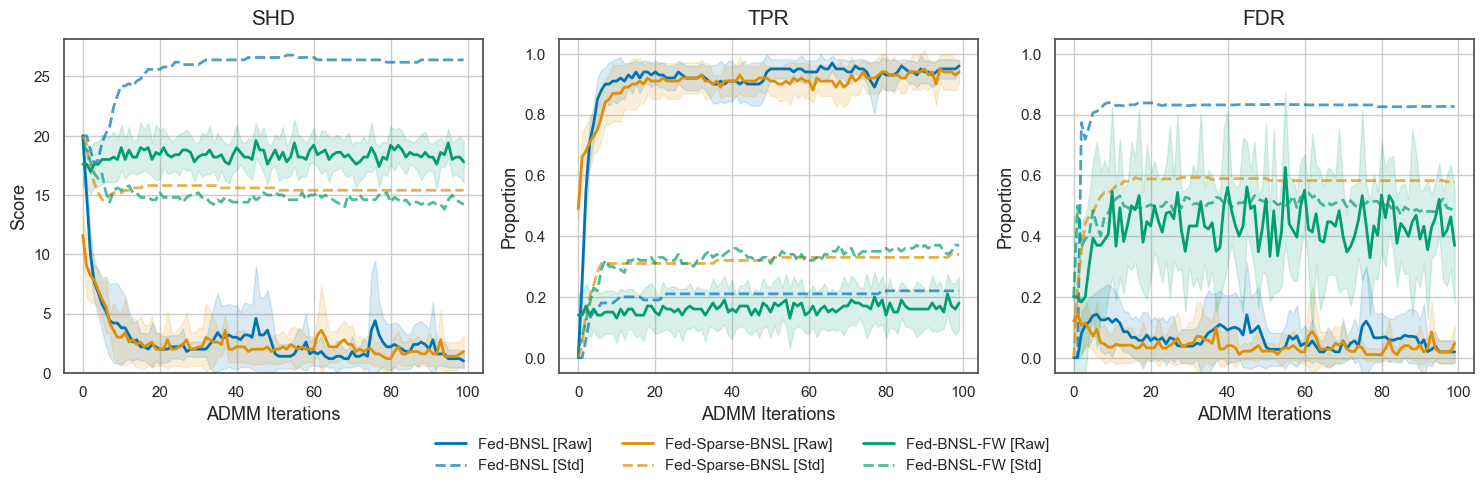} 
    \label{fig:std_cv}
    \end{subfigure}
    \begin{subfigure}[t]{1\textwidth}
    \centering
    \vspace*{-.3cm}
    \includegraphics[width=0.8\textwidth]{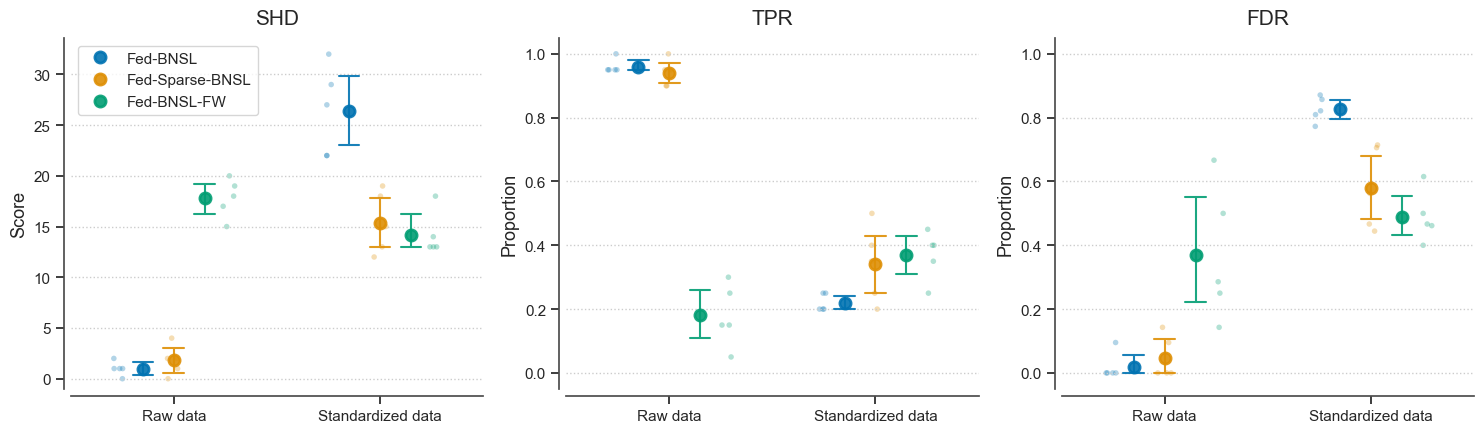} 
    \label{fig:std_final}
    \end{subfigure}
    \caption{Convergence of \prevmethod,  \prevmethod-FW and \method on raw and standardized data. Top: SHD, TPR and FDR across iterations. Solid lines represent raw data, while dashed lines represent standardized data.; Bottom: Final SHD, TPR and FDR (iteration 100).}
    \label{fig:std}
\end{figure*}

In Section~\ref{sec:local_solver}, we theoretically justified the choice of Proximal Greedy Coordinate Descent (PGCD) over classical LASSO solvers (such as Frank-Wolfe). Classical solvers typically require standardized data to apply the $\ell_1$ penalty uniformly. However, in the setting of linear Gaussian BNs, standardizing the data violates the equal error variance assumption, thereby destroying the identifiability of the true DAG \citep{ng,loh}. 
To empirically validate this crucial design choice, we evaluate the performance of: 
\begin{itemize}
    \item Fed-BNSL: the baseline \citep{fed-notears} that solves the unpenalized local subproblem using the exact closed-form solution (Eq.~\ref{eq:closed-form});
    \item \method: solves $\ell_1$-penalized subproblem using PGCD;
    \item Fed-BNSL-FW: a variant of our \method, which solves the $\ell_1$-penalized subproblem using Frank-Wolfe \citep{FW}, a classical LASSO solver.
\end{itemize}
We run each algorithm on both raw data, and standardized data across 5 runs. The results are presented in Figure~\ref{fig:std}, showing the optimization trajectories (top) and the final structural utility (bottom). 
The results confirm that standardization destroys the identifiability of the DAG: when the data is standardized, all three algorithms, including the unpenalized baseline, fail to converge. 
On raw data, identifiability is preserved, and the closed-form baseline (\prevmethod) recovers the true graph. However, for Fed-BNSL-FW which solves the $\ell_1$-penalized problem using Frank-Wolfe, the lack of standardization prevents the solver from applying the $\ell_1$ penalty uniformly, which leads to a highly inaccurate DAG.
In contrast, our proposed approach, \method, which employs PGCD as the solver, effectively resolves this dilemma. The coordinate-wise smoothness constants $M_j$ inherently normalize the gradient steps, enabling PGCD to handle the $\ell_1$-penalized problem on unstandardized data. As a result, \method achieves the high structural utility of the unpenalized baseline while simultaneously enforcing the sparsity necessary for communication efficiency.

\subsection{Structural Convergence: Federated vs. Local Learning}
\label{app:local_NOTEARS}
In Section~\ref{sec:expe}, we demonstrated that \method with participant-level personalization yields lower estimation errors (MSE) than local NOTEARS training. Figure~\ref{fig:local_vs_fed} provides further insights into this result as the structural level. 
\begin{figure*}[t]
    \centering  
\caption{Convergence of \method compared to local NOTEARS on a heterogeneous dataset ($P=20, n_p=50$).}
    \label{fig:local_vs_fed}
\includegraphics[width=0.8\textwidth]{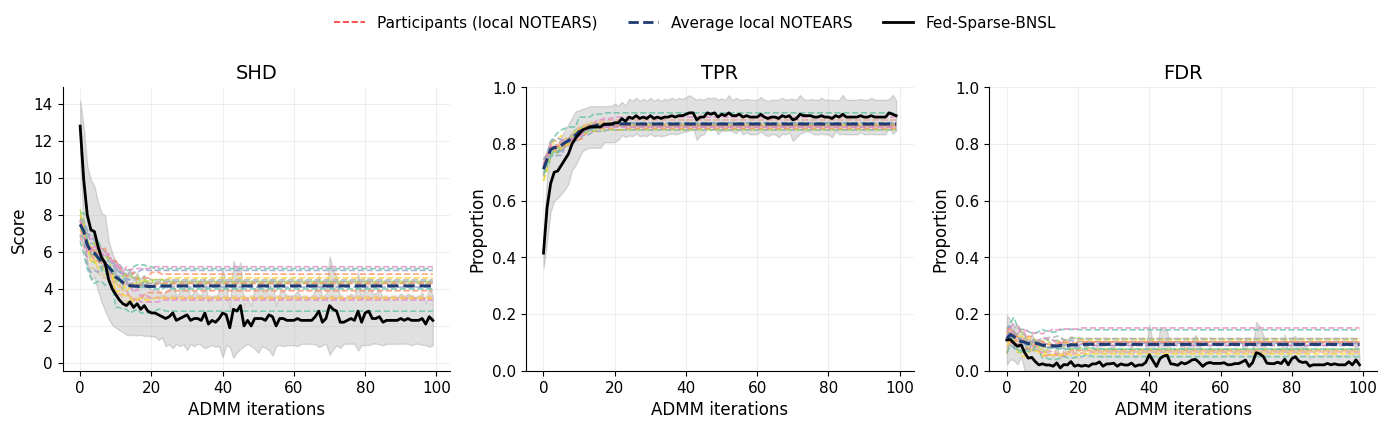} 
\end{figure*}
Although individual local NOTEARS runs converge to reasonably accurate DAGS, \method approach consistently attains lower SHD and FDR while maintaining comparable TPR. This indicates that aggregating information across participants improves structural recovery beyond what is achievable locally, even when local solutions are already moderately accurate.

\subsection{Privacy-Utility Trade-off for Smoothness Estimation}
\label{app:smoothness_expe}
We empirically evaluate the impact of allocating a fraction of the total privacy budget $\varepsilon$ to the private estimation of $M_{j}$. Let $\varepsilon_{\text{total}} = \varepsilon_M + \varepsilon$ be the total privacy budget, where $\varepsilon_M$ is the budget dedicated to estimating the smoothness constants, and $\varepsilon$ is the budget reserved for the main structure learning algorithm.
Figure~\ref{fig:privM} illustrates the end-to-end performance of \dpmethod with $\varepsilon_{\text{total}} = 5$, across different allocation ratios $\varepsilon_M / \varepsilon_{\text{total}} \in \{10\%, 20\%, 30\%, 40\%, 50\% \}$, compared to a baseline where the exact smoothness constants are assumed to be known (labeled as 'Baseline'). The results are reported over $10$ runs.
\begin{figure*}[t]
    \centering  
\includegraphics[width=1\textwidth]{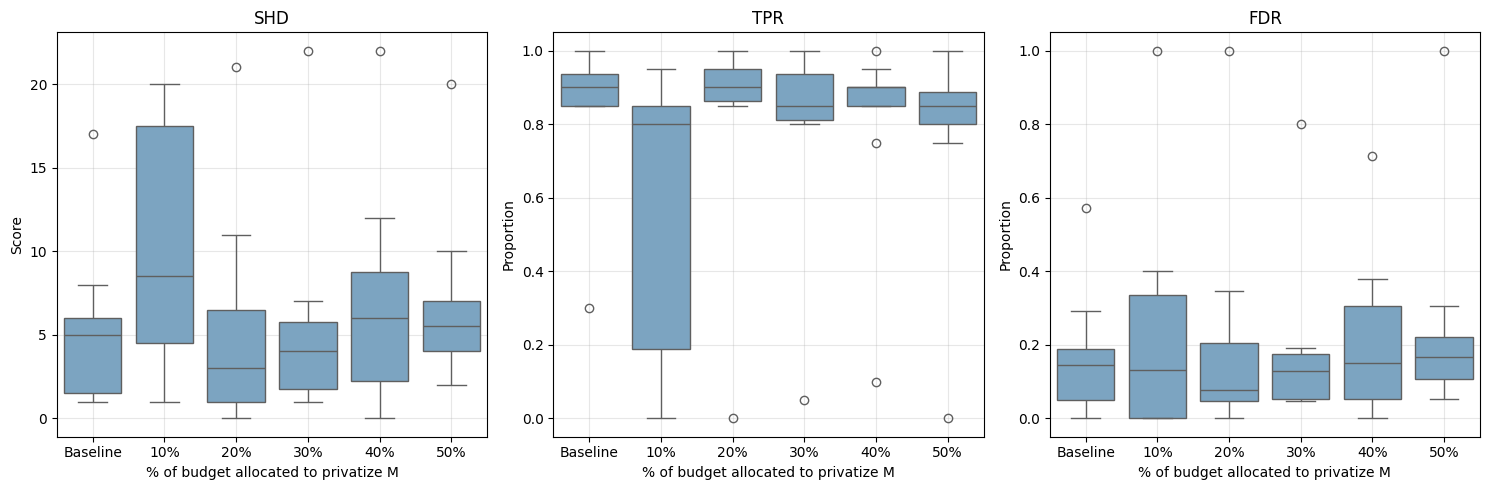} 
\caption{Utility (SHD, TPR, FDR) of \dpmethod with $\varepsilon_{\text{total}}=5$ for varying percentages of the total privacy budget allocated to privatize the smoothness constants $M_j$. 'Baseline' represents \dpmethod with $\varepsilon=5$ using exact, non-private smoothness constants.}
    \label{fig:privM}
\end{figure*}

The results reveal a clear privacy-utility trade-off governed by the budget split:
\begin{itemize}
    \item \textbf{Under-allocation ($10\%$)}: allocating too little budget to $\varepsilon_M$ results in highly noisy estimates of $M_j$. Since $M_j$ acts both as a scaling factor in the score function (Eq.~\ref{eq:score-non-private}) and as the inverse step size in the proximal gradient update (Eq.~\ref{eq:update}), the noise severely impacts the optimization process. This is clearly demonstrated by the large variance and severe degradation in all metrics. 
    \item \textbf{Over-allocation ($40\%, 50\%$)}: conversely, allocating too much budget to $\varepsilon_M$ yields more accurate smoothness constants but starves the main optimization routine $\varepsilon$. Consequently, the gradients and greedy coordinate selections become too noisy, leading to a degradation in structural accuracy.
    \item \textbf{Balanced allocation ($20\%, 30\%$)}: when allocating approximately $20\%$ to $30\%$ of the budget to $\varepsilon_M$, the algorithm achieves an optimal balance. The performance distributions in this regime are statistically comparable to \dpmethod with the non-private $M$ baseline.
\end{itemize}
\textbf{Conclusion.} This empirical evaluation confirms that the assumption of perfectly known smoothness constants can be safely relaxed in practice. By allocating a small portion of the privacy budget, participants can privately estimate their local constants without incurring any significant loss in the final utility.

\subsection{Empirical Comparison of DP-PGCD Variants}
\label{app:DPPGCD-expe}
\begin{figure}[!ht]
    \centering  
\includegraphics[width=1\textwidth]{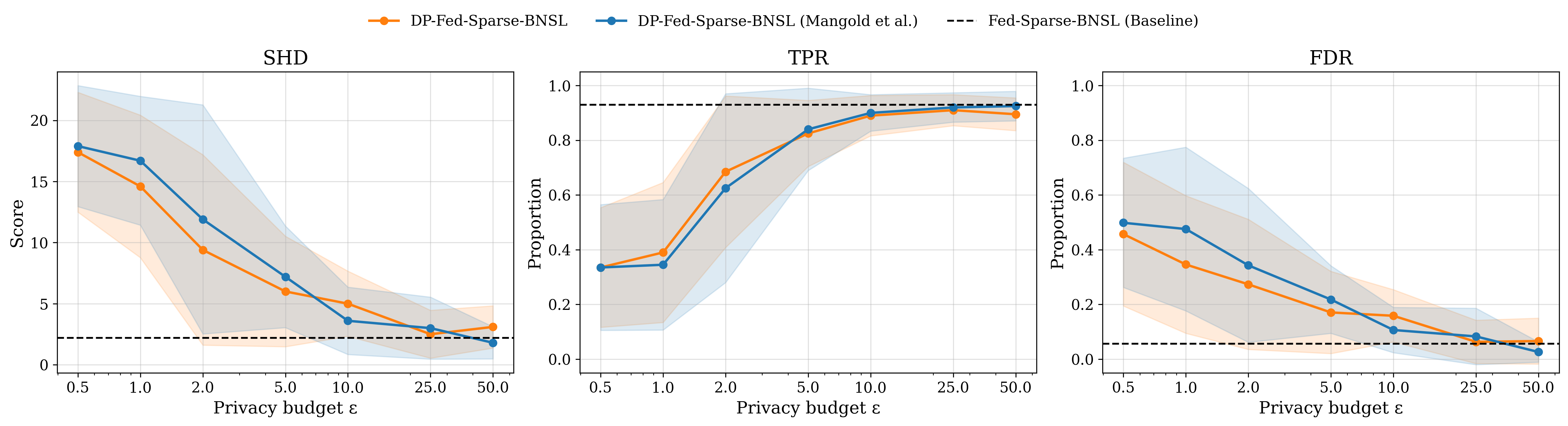} 
\caption{Empirical comparison of our \dpmethod vs. a variant using \citet{mangold}, and the non-private baseline.}
\label{fig:dppgcd}
\end{figure}

To complement the theoretical analysis showing that our improved DP-PGCD algorithm achieves a stronger privacy guarantee for the same amount of noise (see Appendix~\ref{app:DPPGCD-expe}, Figure~\ref{fig:dppgcd} shows empirical results comparing \dpmethod (using our DP-PGCD) with a variant of \dpmethod that uses the original DP-PGCD of \citet{mangold}, as well as with the non-private baseline. The comparison spans SHD, TPR, and FDR metrics as a function of the privacy budget $\varepsilon$. These results confirm that our method consistently achieves higher utility at the same privacy level.

\end{document}